\def\eqref#1{equation~\ref{#1}}
\def\1{\bm{1}}
\def\rvm{{\mathbf{m}}}
\def\rvw{{\mathbf{w}}}
\def\rmE{{\mathbf{E}}}
\def\rmM{{\mathbf{M}}}
\def\rmW{{\mathbf{W}}}
\def\vmu{{\bm{\mu}}}
\def\vh{{\bm{h}}}
\def\vm{{\bm{m}}}
\def\vw{{\bm{w}}}
\def\vx{{\bm{x}}}
\DeclareMathAlphabet{\mathsfit}{\encodingdefault}{\sfdefault}{m}{sl}
\SetMathAlphabet{\mathsfit}{bold}{\encodingdefault}{\sfdefault}{bx}{n}
\def\gC{{\mathcal{C}}}
\def\gD{{\mathcal{D}}}
\def\gL{{\mathcal{L}}}
\def\gM{{\mathcal{M}}}
\newcommand{\R}{\mathbb{R}}
\DeclareMathOperator*{\argmax}{arg\,max}
\DeclareMathOperator*{\argmin}{arg\,min}
\newtheorem{theorem}{Theorem}
\newtheorem*{theorem*}{Theorem}
\newtheorem{definition}{Definition}
\definecolor{highlightgreen}{HTML}{39b54a}
\newcommand{\highlightnumber}[1]{
	\textcolor{purple}{\textbf{#1}}
}
\begin{document}
%

\title{Neural Collapse Terminus: A Unified Solution for Class Incremental Learning and Its Variants}

%
%
%
%

\author{
        Yibo Yang$^*$, 
        Haobo Yuan$^*$, 
        Xiangtai Li, 
        Jianlong Wu, 
        Lefei Zhang,
        Zhouchen Lin,~\IEEEmembership{Fellow,~IEEE}, \\
        Philip H.S. Torr, 
        Dacheng Tao,~\IEEEmembership{Fellow,~IEEE},
        Bernard Ghanem
        
\IEEEcompsocitemizethanks{
\IEEEcompsocthanksitem Y. Yang and B. Ghanem are wth King Abdullah University of Science and Technology, Jeddah, Saudi Arabia.
\IEEEcompsocthanksitem H. Yuan and L. Zhang are with School of Computer Science, Wuhan University, Wuhan, China. $^*$ denotes equal contribution. 
\IEEEcompsocthanksitem X. Li and Z. Lin are with National Key Lab. of General Artificial Intelligence, School of Intelligence Science and Technology, Peking University, Beijing, China.
\IEEEcompsocthanksitem J. Wu is with Harbin Institute of Technology (Shenzhen), China.
\IEEEcompsocthanksitem P. Torr is with University of Oxford, Oxford, United Kindom.    
\IEEEcompsocthanksitem D. Tao is with University of Sydney, Sydney, Australia.


}
}

\IEEEtitleabstractindextext{%
\begin{abstract}

Class incremental learning (CIL) seeks to learn a model with new classes being encountered incrementally. How to enable learnability for new classes while keeping the capability well on old classes has been a crucial challenge for this task. Beyond the normal case, long-tail class incremental learning (LTCIL) and few-shot class incremental learning (FSCIL) are also proposed to consider the data imbalance and data scarcity, respectively, which are common in real-world implementations and further exacerbate the well-known problem of catastrophic forgetting. Existing methods are specifically proposed for one of the three tasks, trying to induce enough margin between the new-class and old-class prototypes and prevent a drastic drift of backbone features when each new session comes in. However, this would inevitably cause a misalignment between the feature and prototype of old classes, which explains the origin of catastrophic forgetting. In this paper, inspired by neural collapse that reveals an optimal feature-classifier geometric structure in classification, we offer a unified solution to the misalignment dilemma in the three tasks. Concretely, we propose neural collapse terminus that is a fixed structure with the maximal equiangular inter-class separation for the whole label space. It serves as a consistent target throughout the incremental training to avoid dividing the feature space incrementally. For CIL and LTCIL, we further propose a prototype evolving scheme to drive the backbone features into our neural collapse terminus smoothly. 
Our method also works for FSCIL with only minor adaptations. Theoretical analysis indicates that our method holds the neural collapse optimality in an incremental fashion regardless of data imbalance or data scarcity. We also design a generalized case where we do not know the total number of classes and whether the data distribution is normal, long-tail, or few-shot for each coming session, to test the generalizability of our method. 
Extensive experiments with multiple datasets are conducted to demonstrate the effectiveness of our unified solution to all the three tasks and the generalized case. Code address: \url{https://github.com/NeuralCollapseApplications/UniCIL}.
\end{abstract}

\begin{IEEEkeywords}
class incremental learning, long-tail CIL, few-shot CIL, neural collapse, feature-classifier alignment
\end{IEEEkeywords}}

\maketitle

\IEEEdisplaynontitleabstractindextext

%
\IEEEpeerreviewmaketitle

\IEEEraisesectionheading{\section{Introduction}\label{sec:introduction}}

%
%
%
%

\IEEEPARstart{H}{uman} intelligence can easily acquire new knowledge in an incremental manner even with limited data. For artificial intelligence models, however, despite the great success of deep learning in a closed label space~\cite{lecun2015deep}, it is still challenging to learn new classes continually without forgetting the capability on old classes \cite{cauwenberghs2000incremental,polikar2001learn++}. To this end, \textit{class incremental learning} (CIL)~\cite{mensink2013distance,li2017learning,rebuffi2017icarl} was proposed to tackle this problem. CIL demands a single model to have a satisfactory classification performance on all the classes that have been encountered in the training of an incrementally appeared data stream~\cite{rebuffi2017icarl}. In real-world implementations, data imbalance and data scarcity are common, and in many applications, the two demands - learning incrementally and learning with imbalanced or insufficient data - emerge simultaneously. As a result, beyond the normal case, \emph{long-tail class incremental learning} (LTCIL) \cite{liu2022long} was proposed to extra consider data imbalance, and as an extreme case, \emph{few-shot class incremental learning} (FSCIL) \cite{tao2020few} was also proposed where there are only very limited (usually 5) samples for each new class in the incremental sessions. 




The key challenge shared by the three problems (CIL, LTCIL, and FSCIL) is the infamous \textit{catastrophic forgetting}~\cite{goodfellow2013empirical,rebuffi2017icarl}, which means that deep neural networks tend to forget the acquired knowledge drastically when updated with new data. For CIL, there have been intense studies ~\cite{liu2020mnemonics, zhao2020maintaining, iscen2020memory, yu2020semantic, hu2021distilling, yan2021dynamically, liu2021adaptive, wu2021striking, pernici2021class, zhou2021co, liu2021rmm, ni2021alleviate, liu2022model, shi2022mimicking, wang2022foster, pourkeshavarzi2022looking, zhu2021class, bhunia2022doodle,ashok2022class,simon2021learning} trying to achieve two important objectives: 1) when training on novel classes, making the backbone network stable to weaken the feature shift so as to prevent from losing the capability on old classes~\cite{hou2019learning, douillard2020podnet, liu2021adaptive, liu2022model}; 2) inducing a large margin between the newly learned novel-class prototypes (class prototypes refer to classifier vectors in this paper) and the previously learned old-class ones to enable the recognition on new classes~\cite{hou2019learning, ni2021alleviate, shi2022mimicking, wang2022foster}. However, as shown in Figure~\ref{fig1}-(a), since the old-class prototypes are already aligned with their corresponding features, adjusting them to be separated from the new-class prototypes will drive them shifted. Considering that the backbone is usually intentionally kept stable~\cite{rebuffi2017icarl, hou2019learning, douillard2020podnet}, the shift will inevitably cause a misalignment between the backbone features and the class prototypes.

\begin{figure*}[!t]
	\centering
	\includegraphics[width=0.75\linewidth]{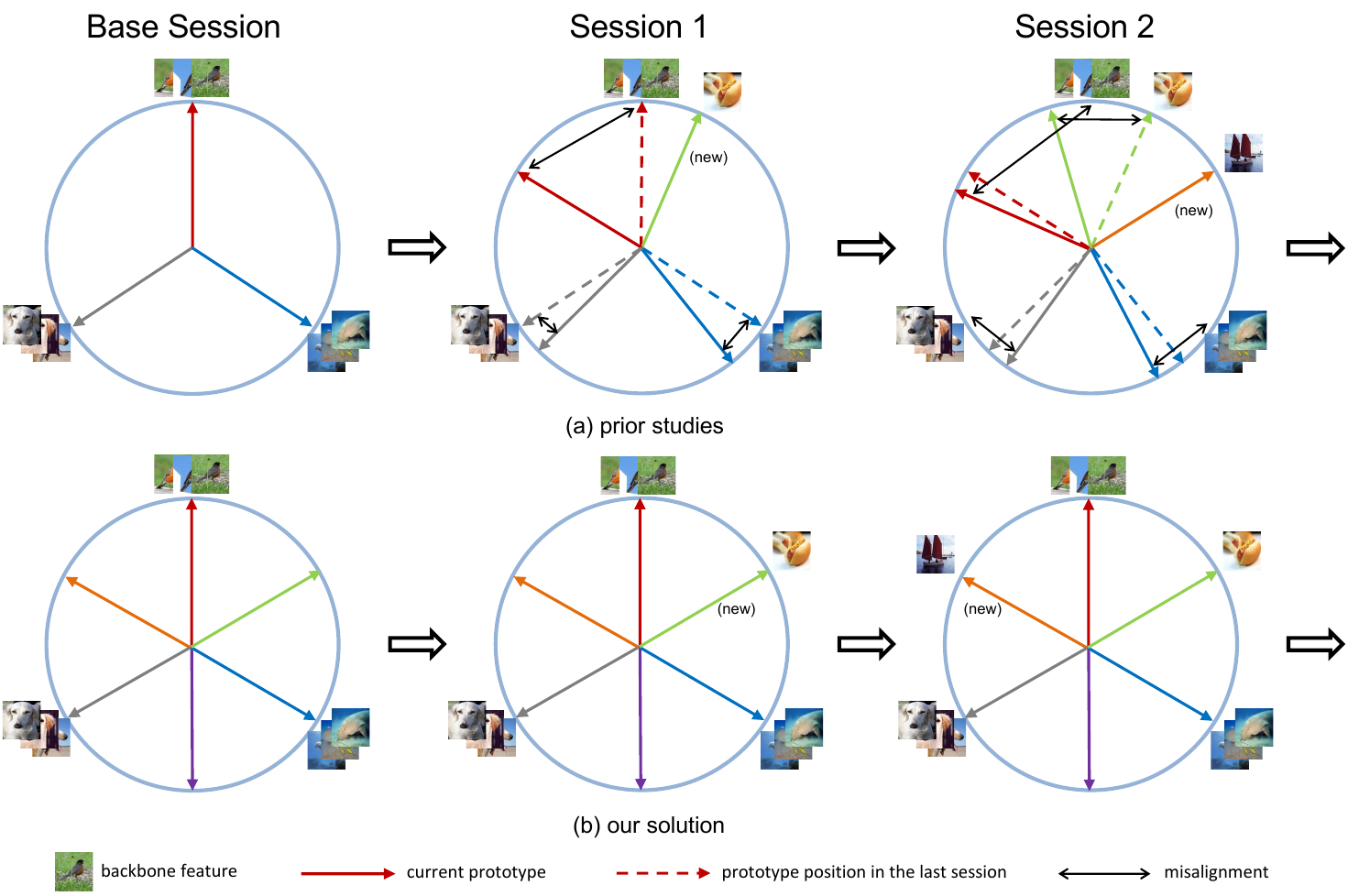}
	\caption{A sketch comparison between prior studies and our solution. 
	(a) Prior studies evolve the old-class prototypes via delicate loss or regularizer to keep them separated from the new-class ones, but will cause misalignment with the backbone features that are usually kept stable (and even fixed in FSCIL). 
	(b) Our solution pre-divides the feature space as a consistent target 
	and trains a model towards the same optimality to avoid dividing the feature space incrementally. }
	\label{fig1}
\end{figure*}

LTCIL \cite{liu2022long} relies on combining the CIL method with imbalanced learning strategies. For FSCIL \cite{tao2020few,dong2021few}, the problem becomes harder because few-shot data in novel sessions can easily induce over-fitting \cite{vinyals2016matching,ravi2017optimization,snell2017prototypical,sung2018learning}. Hence, existing studies favor training a backbone network on the base session as a feature extractor \cite{zhang2021few,hersche2022constrained,akyurek2021subspace}. For novel sessions, the backbone network is fixed and only a set of new-class prototypes are learned continually. But the newly added prototypes may lie close to the old-class ones, which impedes the ability to discriminate between the old-class and new-class samples. As a result, sophisticated loss functions or regularizers are usually adopted to keep a sufficient margin between the old-class and the new-class prototypes \cite{chen2021incremental,hersche2022constrained,akyurek2021subspace}. However, similar to CIL, there will be a misalignment between the adjusted class prototypes and the fixed backbone features for old classes. 
A recent study proposes to reserve feature space for novel classes to circumvent their conflict with old classes \cite{zhou2022forward}, but still an optimal feature-classifier alignment is hard to be guaranteed with a learnable classifier \cite{biondi2023cores}.



In summary, existing studies mainly propose a method for one of the three tasks. Based on the aforementioned analysis, we point out that all the catastrophic forgetting problems in CIL, LTCIL, and FSCIL can be largely attributed to the same origin - \textit{misalignment dilemma}, which refers to the misalignment between backbone features and class prototypes on old classes during training a model incrementally. To this end, we pose and study the following question, 


\emph{``Can we look for and pre-assign an optimal feature-classifier alignment such that a model is optimized towards the same fixed optimality throughout the incremental training as a unified solution to CIL, LTCIL, and FSCIL?"}




\subsection{Motivations and Contributions}

The feature-classifier alignment can be explored from a geometric view. An elegant phenomenon discovered in \cite{papyan2020prevalence} named \emph{neural collapse} indicates that, as a classification model is trained towards convergence (after 0 training error rate), the last-layer features output from a backbone network will be collapsed into their within-class feature means, and the feature means of all different classes will be aligned with their corresponding class prototypes in a form of simplex \emph{equiangular tight frame} (ETF), which is a special geometric structure such that $K$ vectors in a high dimension $d\ge K-1$ has equiangular separation with the minimized cosine similarity of $-\frac{1}{K-1}$. 
Particularly when $d=K-1$, a simplex ETF reduces to a regular simplex such as triangle and tetrahedron. 
The features as instructed by neural collapse have the minimized within-class variance due to the collapse, and the maximized between-class variance due to the ETF structure \cite{martinez2001pca}. So, it can be deemed as an optimal state with the maximized discriminant ratio \cite{fisher1936use,rao1948utilization} for classification. Following studies aim to theoretically explain this phenomenon \cite{fang2021exploring,han2022neural}. Interestingly, this structure is also proved to be the global optimality of supervised contrastive loss~\cite{graf2021dissecting}, which is similar to the widely used inter-class separation loss proposed in LUCIR~\cite{hou2019learning}.

However, imperfect training conditions, such as imbalanced learning \cite{fang2021exploring,yang2022we,xie2022neural}, will destruct the emergence of neural collapse and cause deteriorated performance. Similarly, when the feature space is divided incrementally in incremental learning, the appealing symmetric structure in neural collapse cannot emerge naturally as well, let alone in the harder cases, LTCIL and FSCIL, that combine both difficulties. This motivates us to keep such an optimal state instructed by neural collapse as sound as possible in incremental learning tasks. Concretely, we propose \emph{neural collapse terminus} that is a pre-assigned simplex equiangular tight frame structure for the whole label space. As shown in Figure~\ref{fig1}-(b), it serves as a consistent target and keeps invariant throughout the incremental training. 
In each session, what we need to do is to align the backbone feature towards its corresponding terminus to avoid assigning the feature space incrementally. As a result, there is no need to optimize the discrimination between novel and old classes via delicate regularization \cite{chen2021incremental,hersche2022constrained,akyurek2021subspace}. 

A preliminary version of our study has appeared as a spotlight presentation in ICLR 2023 \cite{yang2023neural}. However, the method in \cite{yang2023neural} only deals with FSCIL, and \textbf{cannot} be applied into CIL or LTCIL directly. In FSCIL, the backbone is usually intentionally not learnable on the few-shot data of incremental sessions, while in CIL and LTCIL, one can finetune the backbone with knowledge distillation and data rehearsal \cite{li2017learning,hou2019learning,rebuffi2017icarl}. Directly using the fixed ETF classifier in \cite{yang2023neural} will induce a large loss on new classes and thus cause a drastic shift of the backbone features to forget on old classes. Inherited from \cite{yang2023neural}, in this paper, we further develop a prototype evolving scheme named \emph{flying to collapse}, which gradually evolves the prototypes from the nearest class means into the neural collapse terminus. It enables to attain the pre-assigned feature-classifier alignment in a smooth manner to facilitate the training of backbone features towards the neural collapse terminus for CIL and LTCIL. 

In this way, our method can be a unified solution to the three tasks.
On CIL and LTCIL, our architecture and training setting are exactly the same, and on FSCIL, we only need minor adaptations due to different data access and training strategies allowed. 
Concretely, we apply the misalignment loss in \cite{yang2022we} into both aligning and distillation for CIL and LTCIL, and introduce a projection layer and a feature mean memory for FSCIL. 
In order to further test the generalizability, we design a generalized case, where the total number of classes and the data distribution of each new session are unknown. 
Thanks to the interpretability of neural collapse, we also perform theoretical analysis that shows 
the global optimality using our method satisfies neural collapse regardless of training incrementally or data imbalance. 
We conduct extensive experiments with multiple datasets to demonstrate the effectiveness of our unified method on all the three tasks and the generalized case.

In summary, the contributions of this paper can be listed as follows:

\begin{itemize}
	\item We point out that the catastrophic forgetting in multiple class incremental learning tasks derive from the same origin, the misalignment dilemma. Based on neural collapse, we propose a unified solution named neural collapse terminus as a consistent target throughout the incremental training to relieve the feature-classifier misalignment. 
	As far as we know, we are the first to propose a unified solution to the three class incremental learning tasks.
	\item For CIL and LTCIL, we propose a prototype evolving scheme named flying to collapse that
	drives the backbone features smoothly into the neural collapse terminus to avoid sharp shift. 
	We also apply a novel misalignment loss for both aligning and distillation. 
	For FSCIL, we introduce a projection layer and a feature mean memory to align the features when the backbone network is fixed.
	\item We perform theoretical analysis to show that our method can induce the neural collapse optimality regardless of incremental training or data imbalance. 
	\item To further test the generalizability of our method, we also design a generalized case where we do not know the total number of classes and whether each session is normal, imbalanced, or few-shot. Extensive experiments with multiple datasets are conducted with state-of-the-art performances on all the three tasks and the generalized case. 
\end{itemize}








\section{Related Work}
\label{Related_work}

\subsection{Class Incremental Learning and Its Variants}

\subsubsection{The normal case}
Different from task incremental learning~\cite{aljundi2018memory, lopez2017gradient, mirzadeh2020understanding, van2022three}, in which the task (session) information is available at test time, class incremental learning (CIL) learns classes sequentially and infers data without knowing which session it comes from. 
To mitigate the catastrophic forgetting~\cite{rebuffi2017icarl,goodfellow2013empirical} in CIL, several kinds of strategies have been proposed.

\noindent
\textbf{Knowledge Distillation.} It is natural to leverage knowledge distillation (KD)~\cite{hinton2014distilling} with the model from the earlier session as the teacher network to alleviate forgetting when training on each incremental session~\cite{li2016learning}. Some studies use multiscale based KD~\cite{zhou2019m2kd, douillard2020podnet}, class-logit based KD~\cite{rebuffi2017icarl, castro2018end, wu2019large, zhao2020maintaining}, or cosine-similarity constraint~\cite{hou2019learning} to make the backbone stable on old-class features. However, it is hard for KD to ensure an optimal tradeoff between the stability on old classes and  the plasticity for new classes \cite{douillard2020podnet, wu2021striking}. 


\noindent
\textbf{Rehearsal.} The rehearsal strategy refers to letting the model have access to a memory with a limited size for saving data and labels (exemplars) of old classes. This strategy is widely adopted by CIL methods~\cite{hou2019learning, douillard2020podnet, liu2021adaptive, liu2021rmm, shi2022mimicking} since iCaRL~\cite{rebuffi2017icarl}. Beyond the basic memory allocation strategy (\emph{i.e.} \textit{herding}~\cite{rebuffi2017icarl, hou2019learning}), following works resort to generating exemplars~\cite{shin2017continual, liu2020mnemonics} or using REINFORCE~\cite{williams1992simple} to select informative exemplars of different classes~\cite{liu2021rmm}.

\noindent
\textbf{Dynamic Network Architecture.} Using an ``incremental architecture'' for CIL is good for preserving old knowledge as well as leaving enough model capacity for new data~\cite{douillard2022dytox, li2021preserving, yan2021dynamically, wang2022foster}. Some other works freeze partial network parameters for similar motivation~\cite{liu2021adaptive, abati2020conditional}. However, dynamic architecture may induce unbounded computational and memory overhead. Also, fixing partial parameters may introduce noise for new classes, as noticed in~\cite{wang2022foster}.

\noindent
\textbf{Regularization.} 
Existing methods~\cite{rebuffi2017icarl, hou2019learning, douillard2020podnet} introduce regularization terms to consolidate the previous knowledge and induce enough margin for incoming classes. Advanced 
designs include preserving the topology of features~\cite{tao2020topology} and mimicking the oracle model at the initial session~\cite{shi2022mimicking}. Nonetheless, the two goals are hard to attain simultaneously without qualification via regularization. 

In this work, we use the neural collapse inspired terminus as a consistent target throughout the incremental training to tackle the misalignment dilemma without introducing any advanced regularization term or dynamic architecture. Following \cite{hou2019learning}, our method for CIL only adopts the basic KD~\cite{hou2019learning} and the basic {herding} rehearsal strategy~\cite{rebuffi2017icarl} for simplicity, but still achieves superior performance over the methods with complex strategies.

\subsubsection{LTCIL and FSCIL}

CIL has several variants that incorporate the challenges in imbalanced learning \cite{liu2022long}, few-shot learning \cite{tao2020few,dong2021few}, and federated learning \cite{dong2022federated}. In this paper, apart from the normal CIL, we mainly focus on the long-tail case LTCIL, and the few-shot case FSCIL. In \cite{liu2022long}, the popular two-stage methods for long-tail classification \cite{cao2019learning,chu2020feature, kang2019decoupling, zhang2021distribution, zhong2021improving} are combined with the typical CIL methods \cite{castro2018end, hou2019learning, douillard2020podnet} as a baseline for LTCIL. First proposed by \cite{tao2020few}, FSCIL only has a few novel-classes and training data in each incremental session \cite{tao2020few,dong2021few}, which increases the tendency of overfitting on novel classes \cite{snell2017prototypical,sung2018learning}. This requires a careful balance between well adapting to novel classes and less forgetting on old classes \cite{zhao2021mgsvf}. Some studies try to make base and incremental sessions compatible via pseudo-feature \cite{cheraghian2021synthesized,zhou2022forward}, augmentation \cite{peng2022few}, or looking for a flat minima \cite{shi2021overcoming}. Another choice is to use meta learning \cite{yoon2020xtarnet,chi2022metafscil,zhou2022few}. Similar to CIL, the old-class prototypes should have enough margin with the new-class ones, and meanwhile, the shifted old-class prototypes should not be too far away from their original positions where the fixed backbone features are located. 
Current studies widely rely on evolving the prototypes \cite{zhang2021few,zhu2021self} or sophisticated designs of loss and regularizer \cite{ren2019incremental,hou2019learning,tao2020topology,joseph2022energy,lu2022geometer,hersche2022constrained,chen2021incremental,akyurek2021subspace,yang2022dynamic}. 
However, the two goals have inherent conflict, and a tough effort to balance the loss terms is necessary. \emph{In contrast, our method for FSCIL only uses a single loss without any regularizer.} 


\subsection{Neural Collapse}

Neural collapse first discovered in~\cite{papyan2020prevalence} reveals an elegant phenomenon about the last-layer backbone features and class prototypes in a well-trained neural network. Beyond this observation, later studies theoretically explain neural collapse by proving this phenomenon as the global optimality of balanced training with the cross entropy~\cite{weinan2022emergence, lu2020neural, graf2021dissecting, fang2021exploring, zhu2021geometric, ji2021unconstrained} and mean squared error~\cite{mixon2020neural, poggio2020explicit, zhou2022optimization, han2022neural, tirer2022extended} loss functions. More recent studies have shown that neural collapse can be a good perspective to study imbalanced learning~\cite{yang2022we, xie2022neural, thrampoulidis2022imbalance,behnia2023implicit, zhong2023understanding}, transfer learning~\cite{galanti2022on,li2022principled}, generalization \cite{xu2023dynamics}, and the power of quantum neural network \cite{du2022demystify}. To the best of our knowledge, \emph{we are the first to study a unified solution to multiple class incremental learning tasks from the neural collapse perspective, which offers our method sound interpretability}.



\section{Preliminaries}
\label{background}

\subsection{Problem Setup}


\noindent\textbf{The normal case:}
Class Incremental Learning (CIL) trains a neural network model progressively in a sequence of datasets $\{{\mathcal{D}}^{(0)}, {\mathcal{D}}^{(1)}, \dots, {\mathcal{D}}^{(T)}\}$, where ${\mathcal{D}}^{(t)} = \{(\vx_i, y_i)\}_{i=1}^{|{\mathcal{D}}^{(t)}|}$, and the label space of the session $t$ is $\mathcal{C}^{(t)} = {\rm set}(\{y_i|(\vx_i,y_i)\in\mathcal{D}^{(t)}\})$. Note that in CIL, $\mathcal{C}^{(t)}\cap\mathcal{C}^{(t')}=\emptyset$, $\forall t'\ne t$, which means that there is no overlap of class labels between sessions. Usually, there are plenty of classes in $\{{\mathcal{D}}^{(0)}\}$ for a model to get the basic visual capability, and we call $\{{\mathcal{D}}^{(0)}\}$ as \textit{base session}. The training on $\{{\mathcal{D}}^{(1)}, {\mathcal{D}}^{(2)}, \dots, {\mathcal{D}}^{(T)}\}$ is called \textit{incremental sessions}. After training of each session, the model is tested on the unified label space $\mathcal{\hat{C}}^{(t)} = \cup_{i=0}^{t}\mathcal{C}^{(i)}$, measuring the performance on test data of all the classes that have been encountered. 

\noindent\textbf{The long-tail case:}
In LTCIL, the number of training samples follows an exponential decay across all classes. 
The remaining settings are the same as CIL. 

\noindent\textbf{The few-shot case:}
In FSCIL, each incremental session $\gD^{(t)}$, $t>0$, only has a few labeled images and we have $|\gD^{(t)}|=pq$, where $p$ is the number of classes and $q$ is the number samples per novel class, known as $p$-way $q$-shot.

\noindent\textbf{Experimental settings:} The widely adopted practice for CIL allows the usage of knowledge distillation (KD) from the model trained in the previous session, and a limited size of memory to store the data from the earlier sessions (rehearsal) \cite{hou2019learning}. Some studies adopt more strategies such as dynamic architecture \cite{liu2021adaptive,yan2021dynamically} or complex regularization \cite{shi2022mimicking,zhou2022forward}. The setup for LTCIL is the same as CIL. But for FSCIL, usually only the data in $\gD^{(t)}$ is accessible in session $t$,
and the training sets of the previous sessions are not available to store. A widely adopted remedy is to store the feature mean of each old class \cite{cheraghian2021semantic, chen2021incremental, akyurek2021subspace, hersche2022constrained}. 
In our method, we only adopt the basic allowable strategies, KD and rehearsal for CIL and LTCIL, and storing feature mean for FSCIL, as detailed in Section \ref{implement}. 




\subsection{Neural Collapse Background}



\begin{definition}[Simplex Equiangular Tight Frame]
	\label{ETF}
	A simplex equiangular tight frame (ETF) refers to a collection of vectors $\{\mathbf{e}_i\}_{i=1}^K$ in $\mathbb{R}^d$, $d\ge K-1$, that satisfies:
	\begin{equation}\label{mimj}
	\mathbf{e}^T_{k_1}\mathbf{e}_{k_{2}}=\frac{K}{K-1}\delta_{k_1,k_2}-\frac{1}{K-1},\ \ \forall k_1, k_2\in[1,K],
    \end{equation}
    where $\delta_{k_1,k_2}=1$ when $k_1=k_2$, and 0 otherwise. All vectors have the same $\ell_2$ norm and any pair of two different vectors has the same inner product of $-\frac{1}{K-1}$, which is the minimum possible cosine similarity for $K$ equiangular vectors in $\mathbb{R}^d$. 
\end{definition}

A simplex equiangular tight frame can be produced from an orthogonal basis via:
	\begin{equation}\label{ETF_M}
		\rmE=\sqrt{\frac{K}{K-1}}\mathbf{U}\left(\mathbf{I}_K-\frac{1}{K}\mathbf{1}_K\mathbf{1}_K^T\right),
	\end{equation}
where $\mathbf{E}=[\mathbf{e}_1,\cdots,\mathbf{e}_K]\in\mathbb{R}^{d\times K}$ is a simplex ETF, 
$\mathbf{U}\in\mathbb{R}^{d\times K}$ is an orthogonal basis and satisfies $\mathbf{U}^T\mathbf{U}=\mathbf{I}_K$,
$\mathbf{I}_K$ is an identity matrix, and $\mathbf{1}_K$ is an all-ones vector.

Then the neural collapse phenomenon can be formally described by the following four properties \cite{papyan2020prevalence}:

\textbf{(NC1)}: The last-layer features of the same class will collapse into their within-class mean, \emph{i.e.,} the covariance $\Sigma^{(k)}_W\rightarrow\mathbf{0}$, where $\Sigma^{(k)}_W=\mathrm{Avg}_{i}\{(\vmu_{k,i}-\vmu_{k})(\vmu_{k,i}-\vmu_{k})^T\}$, $\vmu_{k,i}$ is the feature of sample $i$ in class $k$, and $\vmu_{k}$ is the within-class feature mean of class $k$;

\textbf{(NC2)}: The feature means of all classes centered by the global mean will converge to the vertices of a simplex ETF defined by Definition~\ref{ETF}, \emph{i.e.,} $\{\hat{\vmu}_k\}_{k=1}^K$, satisfy Eq. (\ref{mimj}), where $\hat{\vmu}_k=(\vmu_k - \vmu_G) / \|  \vmu_k - \vmu_G \|$ and $ \vmu_G$ is the global mean;

\textbf{(NC3)}: The feature means centered by the global mean will be aligned with their corresponding class prototypes (classifier vector), which means that the class prototypes will converge to the same simplex ETF, \emph{i.e.,} $\hat{\vmu}_k=\vw_k/ \| \vw_k \|$, $1\le k \le K$, where $\vw_k$ is the class prototype of class $k$;

\textbf{(NC4)}: When \textbf{(NC1)}-\textbf{(NC3)} hold, the model prediction based on logits can be simplified to select the nearest class center\footnote{We omit the bias term in a linear classifier layer for simplicity.}, \emph{i.e.,} $\argmax_k\langle\vmu, \vw_k\rangle=\argmin_k||\vmu-\vmu_k||$, where $\langle\cdot\rangle$ is the inner product operator, $\vmu$ is the last-layer feature of a sample for prediction.

Due to the feature invariance within each class and the maximal equiangular separation among all class centers, neural collapse describes an optimal feature-classifier alignment status for classification.



\section{Method}
\label{methods}



 In Section \ref{etf_classifier}, we propose a consistent training target for class incremental learning tasks, based on which we develop our methods for CIL, LTCIL, and FSCIL in Sections \ref{method_CIL} - \ref{method_FSCIL}, and conduct theoretical support in Section \ref{theoretical}. In Section \ref{generalized_case}, we introduce a generalized case. The implementations are detailed in Section \ref{implement}.

\subsection{A Consistent Target---Neural Collapse Terminus}
\label{etf_classifier}

The neural collapse phenomenon cannot naturally happen in imperfect training conditions, such as imbalanced learning. In incremental training, at each incremental session $t$, the label space will be enlarged from $\mathcal{\hat{C}}^{(t - 1)}$ to $\mathcal{\hat{C}}^{(t)}$, and the classifier needs to update from $\mathbf{W}_{t-1}\in\mathbb{R}^{d\times |\mathcal{\hat{C}}^{(t - 1)}|}$ to $\mathbf{W}_{t}\in\mathbb{R}^{d\times |\mathcal{\hat{C}}^{(t)}|}$ to match the extended label space, where $d$ is the dimension of the backbone feature. When the new-class prototypes for $\mathcal{C}^{(t)}$ are merged with $\mathbf{W}_{t-1}$, the distance between the old-class and new-class prototypes may be close. It is hard to ensure an equal separation among all classes via regularization. Besides, the adjusted old-class prototypes may cause misalignment with their backbone features, which could be the main underlying reason for the catastrophic forgetting problem \cite{joseph2022energy}. Therefore, the motivation of our unified solution is to induce neural collapse for 
an optimal feature-classifier alignment and keep it as sound as possible during incremental training.


Suppose that for a class incremental learning problem the base session contains a label space of $K_0$ classes, each incremental session has $p$ classes, and we have $T$ incremental sessions in total. The whole label space has $K_0+K'$ classes, where $K'=Tp$, \emph{i.e.,} we need to learn a model that can recognize samples from $K_0+K'$ classes. 
In order to alleviate the misalignment dilemma, 
we propose to pre-assign an optimal status, named neural collapse terminus, and train a model towards this consistent target throughout incremental training. This can be achieved via fixing class prototypes as the structure instructed by neural collapse. 
Actually, using a fixed classifier has been proved successful in classification without sacrificing performance \cite{hoffer2018fix,pernici2021regular,tanwisuth2021prototype}. 
We randomly initialize class prototypes $\hat{\rmW}_{\rm ETF}\in\R^{d\times (K_0+K')}$ by Eq. (\ref{ETF_M}) for the whole label space, \emph{i.e.,} the union of classes in all sessions $\cup_{t=0}^{T}\gC^{(t)}$, where $K_0=|\gC^{(0)}|$ and $K'=\sum^T_{t=1}|\gC^{(t)}|=Tp$. Here we assume that we know the total number of classes in advance. In Section \ref{generalized_case}, we will discuss how to apply our method when this prior is unknown. 
Then any pair ($k_1,k_2$) of class prototypes in $\hat{\rmW}_{\rm ETF}$ satisfies:
\begin{equation}\label{k1k2}
	\hat{\mathbf{w}}^T_{k_1}\hat{{\rvw}}_{k_{2}}=\frac{K_0+K'}{K_0+K'-1}\delta_{k_1,k_2}-\frac{1}{K_0+K'-1},
\end{equation}
for all $k_1, k_2\in[1,K_0+K']$, where $\hat{\rvw}_{k_1}$ and $\hat{\rvw}_{k_2}$ are two column vectors in $\hat{\rmW}_{\rm ETF}$. 

$\hat{\rmW}_{\rm ETF}$ ensures that the whole label space has the maximal equiangular separation, \emph{i.e.,} any two prototypes of different classes have the same cosine similarity of $-\frac{1}{K-1}$. It is not learnable, and serves as a consistent target for an incrementally trained model to attain. In this way, there is no need to assign the feature space incrementally, and shift prototypes via sophisticated regularization. What we need to do in incremental training is to align the backbone features into their corresponding prototypes.




\subsection{Normal Class Incremental Learning (CIL)} 

\label{method_CIL}

\subsubsection{Flying to collapse}
\label{cil-fly}

We have introduced neural collapse terminus as a consistent target. 
However, how to facilitate training towards this ideal terminus is not a trivial problem.
At the beginning of each incremental session, the new-class features have a large misalignment with their target prototypes. If we directly minimize the distance between backbone features and the terminus, new classes will contribute much more to the loss value and dominate the backbone training. As a result, it will cause the forgetting on old classes in another way.

To pave the way to neural collapse terminus (NCT), we further propose \textit{Flying to Collapse} (FtC) that leads the backbone features from a starting point into the terminus smoothly to avoid their sharp shift. 
The starting point should induce a lower misalignment between the new-class features and prototypes at the beginning of each session's training, such that the backbone will not be dominated by the new classes. To this end, we adopt the  \textit{nearest class mean} (NCM)~\cite{mensink2013distance} as the initial class prototypes for novel classes, which can be formulated as:
\begin{equation}
	\label{ncm}
	\mathbf{w}^{(NCM)}_c={\rm Avg}_i\{\hat{\vmu}_i | y_i=c, c\in \mathcal{C}^{(t)}\},
\end{equation}
where $c$ is a novel class, $\hat{\vmu}_i={\vmu}_i/\|{\vmu}_i\|$, ${\vmu}_i=f(\vx_i, \theta_{f})$ is the initial backbone feature of input $\vx_i$, and $y_i$ is its label. A series of works~\cite{rebuffi2017icarl, snell2017prototypical, laenen2021on, guo2022learning} use NCM as the classifier to infer test data because it has a close average distance to the features of the same class.



During the training of each incremental session, we gradually evolve the class prototypes from NCM to NCT:
\begin{equation}
	\label{fly}
	\mathbf{w}_c = \eta \hat{\mathbf{w}}_c^{(NCT)} + (1 - \eta) \hat{\mathbf{w}}_c^{(NCM)},\ \eta = \frac{e}{E},
\end{equation}
where $\hat{\mathbf{w}}_c^{(NCM)}$ is the NCM prototype in Eq. (\ref{ncm}) after $\ell_2$-normalization, $\hat{\mathbf{w}}_c^{(NCT)}$ is the prototype in our $\hat{\rmW}_{\rm ETF}$ for class $c$, $e$ is the epoch index, and $E$ is the number of total epochs used to train this session. 
$\eta$ is gradually evolved from 0 to 1 as training goes on. So, at the beginning of each incremental session, we have $\eta=0$, and $\mathbf{w}_c = \hat{\mathbf{w}}^{(NCM)}_c$. At the end of training, $\mathbf{w}_c = \hat{\mathbf{w}}_c^{(NCT)}$, which corresponds to the pre-assigned target instructed by neural collapse. The evolving process is illustrated in Figure~{\ref{fig3}}.

In this way, the novel-class prototypes start from a position close to their features and then drive the features smoothly into the desired terminus. Each epoch's training only advances a mild step to avoid a sharp shift of backbone features that will cause the forgetting on old classes.

\subsubsection{Training}
\label{cil-training}

We denote a backbone network as $f$, and then we have $\vmu=f(\vx, \theta_{f})$, where $\vmu\in\R^{d}$ is the output feature of input $\vx$, and $\theta_f$ is the backbone network parameters.


The aim of our training is to drive the backbone features towards their corresponding class prototypes. 
When our training reaches its optimality, the features of all classes should be aligned with NCT after the final session, which satisfies neural collapse and gets rid of feature-classifier misalignment. 
Because knowledge distillation (KD) is enabled for CIL, another goal is to make the backbone features close between two sequential sessions. 
Therefore, our loss functions in each session's training are composed of two terms, the align term and the distillation term. 

We use a misalignment loss to align the backbone features with their prototypes,
which can be formulated as:
\begin{equation}
	\label{dr}
	\mathcal{L}_{\rm{align}}\left(\hat{\vmu}_i,\hat{\mathbf{w}}_{y_i}\right)=\frac{1}{2}\left(\hat{\mathbf{w}}^T_{y_i} \hat{\vmu}_i - 1\right)^2,
\end{equation}
where $\hat{\vmu}_i$ is the $\ell_2$-normalized backbone feature of the $i$-th sample that could be a novel-class data or an old-class exemplar, 
$y_i$ is its class label, 
and $\hat{\mathbf{w}}_{y_i}$ is the $\ell_2$-normalized class prototype for $y_i$ and evolves by Eq. (\ref{fly}) during training. 
Because $\|\hat{\mathbf{w}}_{y_i}\|=\|\hat{\vmu}_i\|=1$, it is easy to identify that the loss in Eq. (\ref{dr}) reaches its optimality ($\mathcal{L}_{\rm{align}}$=0) if and only if $\cos(\mathbf{w}_{y_i}, {\vmu}_i)=1$, which means that the feature is aligned with its corresponding class prototype.

With the basic knowledge distillation technique in \cite{hou2019learning}, we can store the backbone network trained in the previous session. 
The distillation loss is in a similar form as:
\begin{equation}
	\label{dr_dist}
	\mathcal{L}_{\rm{distill}}\left(\hat{\vmu}_i^{(t-1)},\hat{\vmu}_i^{(t)}\right)=\frac{1}{2}\left(\left(\hat{\vmu}_i^{(t-1)}\right)^T \hat{\vmu}_i^{(t)} - 1\right)^2,
\end{equation}
where $\hat{\vmu}_i^{(t-1)}$ is the feature generated by the old backbone network in the session $(t - 1)$. Our method does not require any other regularization, so the final loss for CIL can be formulated as:
\begin{equation}
	\label{loss_final}
	\mathcal{L} = \mathcal{L}_{\rm{align}} + \lambda \cdot \mathcal{L}_{\rm{distill}}.
\end{equation}

In evaluation, suppose that a feature extracted by the backbone is $\vmu \in \mathbb{R}^d$. We adopt cosine similarity between $\vmu$ and the target $\hat{\rmW}_{\rm ETF}$ as the classification criterion, \emph{i.e.,} $\argmax_k\cos\angle(\vmu, \hat{\mathbf{w}}_k)$, where $\hat{\rvw}_k$ is the column vector in $\hat{\rmW}_{\rm ETF}$ for class $k$. 
Note that cosine similarity criterion has been widely adopted in CIL~\cite{hou2019learning} and other computer vision tasks~\cite{gidaris2018dynamic, peng2022few, wang2018cosface}. It is able to eliminate the inclination towards novel classes caused by learnable bias or modulus~\cite{hou2019learning}.

\begin{figure}[t]
	\centering
	\includegraphics[width=1\linewidth]{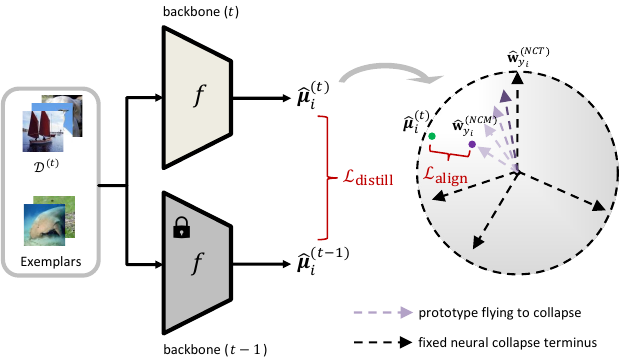}
	\caption{An illustration of our architecture for CIL and LTCIL. The input includes the training data $\gD^{(t)}$ of the current session $t$, and the exemplars allowable for CIL and LTCIL. The arrows moving from light to dark represent the prototypes using our flying to collapse strategy. It starts from the nearest class mean $\hat{\mathbf{w}}_{y_i}^{(NCM)}$ and terminates at our neural collapse terminus $\hat{\mathbf{w}}_{y_i}^{(NCT)}$ by Eq. (\ref{fly}). We adopt this simple pipeline for both CIL and LTCIL without bells and whistles.}
	\label{fig3}
\end{figure}

\subsection{Long-tail Class Incremental Learning (LTCIL)}
\label{method_LTCIL}

Long-tail class incremental learning (LTCIL) is recently proposed in~\cite{liu2022long} to incorporate the common imbalanced data distribution in the real world to CIL, and is thus a more challenging task. In LTCIL~\cite{liu2022long}, the number of training samples follows an exponential decay across all classes that could be ordered or shuffled. 
The other settings are the same as CIL. The current solution \cite{liu2022long} to LTCIL is combining the popular two-stage method in long-tail recognition~\cite{cao2019learning,chu2020feature, kang2019decoupling, zhang2021distribution, zhong2021improving} with the typical CIL methods~\cite{castro2018end, hou2019learning, douillard2020podnet}. That is to say, for each session, a model is trained with an instance-balanced sampler in the first stage, and then only the classifier is re-trained with a class-balanced sampler in the second stage.




However, the two-stage long-tail recognition methods also rely on a learnable classifier. 
As indicated by \cite{fang2021exploring,yang2022we,biondi2023cores}, neural collapse is also vulnerable to imbalanced training data using a learnable classifier. Therefore, our proposed NCT with fixed class prototypes is intrinsically suitable for LTCIL. 
It is natural to expect that our method can be \textbf{directly} applied to LTCIL even \textbf{without} the two-stage training strategy or any other modification. 
In experiments, 
we train for CIL and LTCIL using the same method and training settings to show the effectiveness on CIL and its direct extensibility to LTCIL. 
It is noteworthy that our method is the first one-stage solution to the LTCIL task. 

\subsection{Few-shot Class Incremental Learning (FSCIL)}
\label{method_FSCIL}

\begin{figure*}[t]
	\centering
	\includegraphics[width=0.83\linewidth]{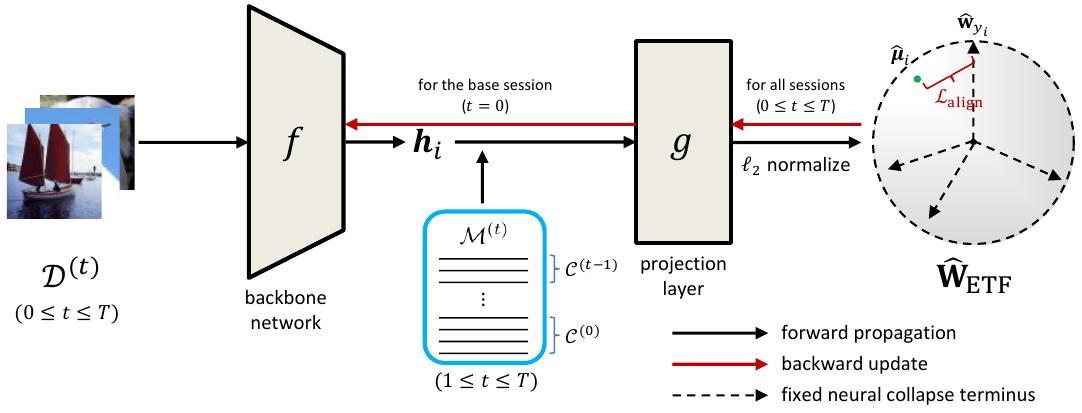}
	\caption{An illustration of our NC-FSCIL. $\vh_i$ is the intermediate feature from the backbone network $f$. $\hat{\vmu}_i$ is the normalized output feature after the projection layer $g$. $\hat{\rmW}_{\rm ETF}$ is the neural collapse terminus that contains the prototypes of the whole label space and serves as a consistent target throughout the incremental training. $\gL_{\rm align}$ denotes the misalignment loss function. $f$ is frozen in the incremental sessions ($1\le t\le T$). $\gM^{(t)}$ is a small memory of old-class feature means that is widely adopted in prior studies such as \cite{cheraghian2021semantic}, \cite{chen2021incremental}, \cite{akyurek2021subspace}, and \cite{hersche2022constrained}.}
	\label{fig2}
\end{figure*}


In FSCIL, only few-shot data is available for incremental sessions and storing old-class data is not allowed. Thus, we need to slightly adjust the architecture and training based on our method for CIL and LTCIL. 
As shown in Figure~\ref{fig2}, 
we append a projection layer $g$ after the backbone network $f$.
The backbone network $f$ takes the training data $\vx_i$ as input, and outputs an intermediate feature $\vh_i$. The projection layer $g$ can be a linear transformation or an MLP block following \cite{hersche2022constrained,peng2022few}. It projects the intermediate feature $\vh_i$ into $\vmu_i$. 
Finally, 
we get the $\ell_2$-normalized output feature $\hat{\vmu}_i$, \emph{i.e.,}
\begin{equation}
	\label{muhat}
	\hat{\vmu}_i={\vmu}_i/\|{\vmu}_i\|,\quad \vmu_i=g(\vh_i, \theta_{g}),\quad \vh_i=f(\vx_i, \theta_{f}),
\end{equation}
where $\theta_f$ and $\theta_g$ refer to the parameters of the backbone network and the projection layer, respectively. 

We use $\hat{\vmu}_i$ to compute error signal by Eq. (\ref{dr}). Different from CIL and LTCIL, the class prototypes for FSCIL do not evolve with our flying to collapse, \emph{i.e.,} $\eta=1$ in Eq. (\ref{fly}).
In the base session $t=0$, we jointly train both $f$ and $g$ using the base session data. The empirical risk to minimize in the base session can be formulated as:
\begin{align}\label{min_base}
	\min_{\theta_f, \theta_g}\quad &\frac{1}{|\gD^{(0)}|}\sum_{(\vx_i,y_i)\in\gD^{(0)}}\gL_{\rm{align}}\left(\hat{\vmu}_i,\hat{\mathbf{w}}_{y_i}\right),
\end{align}
where $\hat{\mathbf{w}}_{y_i}$ is the class prototype in $\hat{\rmW}_{\rm ETF}$ for class $y_i$,
as introduced in Section \ref{etf_classifier}, 
$\gL_{\rm{align}}$ is the misalignment loss as defined in Eq. (\ref{dr}), 
and $\hat{\vmu}_i$ is a function of $f$ and $g$ as Eq. (\ref{muhat}).

In each incremental session $1\le t\le T$, we fix the backbone network $f$ as a feature extractor, and only finetune the projection layer $g$. 
As a widely adopted practice in FSCIL studies, a memory of old-class features can be retained to relieve the overfitting on novel classes \cite{cheraghian2021semantic,chen2021incremental,akyurek2021subspace,hersche2022constrained}. Some studies store the feature means as class prototypes for inference instead \cite{zhou2022forward,zhou2022few,peng2022few}. 
Following \cite{hersche2022constrained}, we only keep a memory $\gM^{(t)}$ of the intermediate feature mean $\vh_c$ for each old class $c$. Concretely, we have, 
\begin{equation}
	\gM^{(t)}=\{\vh_c| c\in \cup_{j=0}^{t-1}\gC^{(j)}\},\  \vh_c = {\rm Avg}_i \{ f(\vx_i, \theta_{f}) | y_i = c\}, 
\end{equation}
for all $1\le t\le T$, where $f$ has been fixed after the base session. Then we use $\gD^{(t)}$ as the input of $f$, and $\gM^{(t)}$ as the input of $g$ to finetune the projection layer $g$. The empirical risk to minimize in incremental sessions is formulated as:
\begin{align}\label{min_inc}
	\min_{\theta_g}\quad &\frac{1}{|\gD^{(t)}|+|\gM^{(t)}|}\left( \gL_{(\gD^{(t)} )} +\gL_{(\gM^{(t)})} \right),
\end{align}
where
$	 \gL_{(\gD^{(t)} )} = \sum_{(\vx_i,y_i)\in\gD^{(t)}}\gL_{\rm{align}}\left(\hat{\vmu}_i,\hat{\mathbf{w}}_{y_i}\right), 
	 \gL_{(\gM^{(t)})} = \sum_{(\vh_c, c)\in\gM^{(t)}}\gL_{\rm{align}}\left(\hat{\vmu}_c,\hat{\mathbf{w}}_{c}\right),
$
$\hat{\vmu}_i$ and $\hat{\vmu}_c$ are the output features of $\vx_i$ and $\vh_c$, respectively, $|\gD^{(t)}|$ is the number of training samples in session $t$, and we have $|\gM^{(t)}|=\sum^{t-1}_{j=0}|\gC^{(j)}|$. 

Therefore, we solve Eq. (\ref{min_base}) on the base session $t=0$, and solve Eq. (\ref{min_inc}) on the incremental sessions $1\le t\le T$. 
\textbf{Thanks to our pre-assigned consistent target, we do not rely on any regularizer in our training}. The evaluation for FSCIL is in the same way as CIL and LTCIL. 



\begin{table*}[!t]
	\renewcommand\arraystretch{1.}
	\small
    \centering
    \caption{Comparison of {average incremental accuracy} on CIFAR-100 and ImageNet-100 datasets. $B$ denotes the number of classes in the base session. The number of exemplars for each class is 20 (except for RMM, which uses an advanced rehearsal strategy and allows a more relaxed memory budget). 
    RMM~\cite{liu2021rmm} is integrated on LUCIR-AANets baseline. CwD~\cite{shi2022mimicking} is with AANet and PODNet baselines (both stronger than LUCIR). CSCCT~\cite{ashok2022class} is with PODNet baseline. 
    Our method is integrated on the plain LUCIR baseline. ``-'' indicates not available.}
     \setlength{\tabcolsep}{10pt}
    \begin{tabular}{lc|ccc|ccc}
        \toprule
        \multirow{2}{*}{Method} & \multirow{2}{*}{Venue} & \multicolumn{3}{c}{CIFAR-100 ($B$=50)} & \multicolumn{3}{c}{ImageNet-100 ($B$=50)}\\
        && 5 steps & 10 steps & 25 steps & 5 steps & 10 steps &25 steps\\
        \midrule
        LwF~\cite{li2016learning}&\scriptsize{(ECCV16)} & 53.59 & 48.66 & 45.56 & 53.62 & 47.64 & 44.32\\

        iCaRL~\cite{rebuffi2017icarl}&\scriptsize{(CVPR17)} &  60.82 & 53.74 & 47.86 & 65.44 & 59.88 & 52.97\\
        
        LUCIR~\cite{hou2019learning}&\scriptsize{(CVPR19)} & 66.27 & 60.80 & 52.96 & 70.60 & 67.76 & 62.76\\

        PODNET~\cite{douillard2020podnet}&\scriptsize{(ECCV20)} & 66.98 & 63.76 & 61.00 & 75.71 & 72.80 & 65.57\\

        AANet~\cite{liu2021adaptive}&\scriptsize{(CVPR21)} & 69.79 & 67.97 & 64.92 & 71.96 & 70.05 & 67.28\\

        DER~\cite{yan2021dynamically}&\scriptsize{(CVPR21)} & 72.60 & 72.45 & - & - & \textbf{77.73} & -\\

        RMM~\cite{liu2021rmm}&\scriptsize{(NeurIPS21)} & 68.42 & 67.17 & 64.56 & 73.58 & 72.83 & 72.30\\

        FOSTER~\cite{wang2022foster}&\scriptsize{(ECCV22)} & - & 67.95 & 63.83 & - & 77.54 & 69.34\\

        CSCCT~\cite{ashok2022class}&\scriptsize{(ECCV22)} & - & 63.72 & 61.10 & 76.41 & 74.35 & 68.91\\

        AFC~\cite{kang2022class}&\scriptsize{(CVPR22)} & 66.49 & 64.98 & 63.89  & 76.87 & 75.75 & 73.34\\
        
        CwD~\cite{shi2022mimicking}&\scriptsize{(CVPR22)} & 70.30 & 68.62 & 66.17 & 76.91 & 74.34 & 68.18\\
        \midrule
        \textbf{NC-CIL~(Ours)} &-& \textbf{75.28} & \textbf{73.33} & \textbf{68.35} & \textbf{79.29} & 77.62 & \textbf{74.97}\\
        \bottomrule
    \end{tabular}
    \label{tab:com}
\end{table*}

\subsection{Theoretical Supports}
\label{theoretical}
We perform our theoretical work based on a simplified model that drops the backbone network and only keeps the last-layer features and class prototypes as independent variables to optimize. The rationality lies in that current backbone networks are usually over-parameterized and can produce features along any direction. 
This simplification has been widely adopted in prior studies to facilitate analysis \cite{graf2021dissecting,fang2021exploring,zhu2021geometric}. We investigate the optimality of an incremental problem of $T$ sessions with our $\hat{\rmW}_{\rm ETF}$. Concretely, we consider the following problem,
\begin{align}
	\label{obj}
	\min_{\rmM^{(t)}}\quad & \frac{1}{N^{(t)}}\sum^{K^{(t)}}_{k=1}\sum^{n_k}_{i=1} \gL\left(\rvm^{(t)}_{k,i},\hat{\rmW}_{\rm ETF}\right), \ 0\le t \le T, \\
	s.t. \quad & \| \rvm^{(t)}_{k,i} \|^2 \le 1, \quad \forall 1\le k \le K^{(t)},\ 1\le i \le n_k, \notag
\end{align}
where $\rvm^{(t)}_{k,i}\in\R^d$ denotes a feature variable that belongs to the $i$-th sample of class $k$ in session $t$, $n_k$ is the number of samples in class $k$, 
$K^{(t)}$ is the number of classes in session $t$, 
$N^{(t)}$ is the number of samples in session $t$, \emph{i.e.,} $N^{(t)}=\sum_{k=1}^{K^{(t)}}n_k$, and $\rmM^{(t)}\in\R^{d\times N^{(t)}}$ denotes a collection of $\vm^{(t)}_{k,i}$. $\hat{\rmW}_{\rm ETF}\in\R^{d\times K}$ refers to the fixed consistent target for the whole label space as introduced in Section \ref{etf_classifier}, and we have $K=\sum_{t=0}^TK^{(t)}$. 
$\gL$ can be both the traditional cross entropy (CE) loss and the misalignment loss that we adopt. 
\begin{theorem}
	\label{theorem}
	Let $\hat{\rmM}^{(t)}$ denotes the global minimizer of Eq. (\ref{obj}) by optimizing the model incrementally from $t=0$, and we have $\hat{\rmM}=[\hat{\rmM}^{(0)}, \cdots, \hat{\rmM}^{(T)}]\in\R^{d\times \sum_{t=0}^T N^{(t)}}$. No matter if $\gL$ in Eq. (\ref{obj}) is CE or misalignment loss, for any column vector $\hat{\rvm}_{k,i}$ in $\hat{\rmM}$ whose class label is $k$, we have:
	\begin{equation}
		\label{theorem_eq}
		\|\hat{\rvm}_{k,i}\|=1,\ \ \hat{\rvm}^T_{k,i}\hat{\rvw}_{k'}=\frac{K}{K-1}\delta_{k,k'}-\frac{1}{K-1},
	\end{equation}
	 for all $k, k'\in[1,K],\ 1\le i \le n_k$, where $K=\sum_{t=0}^TK^{(t)}$ denotes the total number of classes of the whole label space, $\delta_{k,k'}=1$ when $k=k'$ and 0 otherwise, and $\hat{\rvw}_{k'}$ is the class prototype in $\hat{\rmW}_{\rm ETF}$ for class $k'$. 
\end{theorem}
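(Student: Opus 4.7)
The plan is to exploit the fact that with $\hat{\rmW}_{\rm ETF}$ fixed, the problem in Eq.~(\ref{obj}) decouples completely across samples and across sessions: each feature variable $\rvm^{(t)}_{k,i}$ appears in exactly one term of the objective and one constraint $\|\rvm^{(t)}_{k,i}\|^2\le 1$, so the incremental procedure returns the same $\hat{\rvm}^{(t)}_{k,i}$ as a joint minimization. It therefore suffices to solve the single-sample subproblem $\min_{\|\rvm\|\le 1}\gL(\rvm,\hat{\rmW}_{\rm ETF})$ with class label $k$ and to show that the minimizer is $\rvm=\hat{\rvw}_k$. Once this is done, Eq.~(\ref{theorem_eq}) follows immediately, because the columns of $\hat{\rmW}_{\rm ETF}$ are unit length by Definition~\ref{ETF} and the stated inner products are exactly the ETF condition~(\ref{mimj}).

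For the misalignment loss $\gL = \frac{1}{2}(\rvm^T\hat{\rvw}_k - 1)^2$, I would argue in one line via Cauchy--Schwarz: $\rvm^T\hat{\rvw}_k \le \|\rvm\|\,\|\hat{\rvw}_k\| \le 1$, so $\gL \ge 0$, with equality only when $\rvm$ is a unit-norm nonnegative multiple of $\hat{\rvw}_k$, forcing $\rvm=\hat{\rvw}_k$.

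For cross-entropy $\gL(\rvm,\hat{\rmW}_{\rm ETF}) = -\rvm^T\hat{\rvw}_k + \log\sum_{k'=1}^{K}\exp(\rvm^T\hat{\rvw}_{k'})$, I would proceed by convexity. The log-sum-exp of affine functions is convex and the feasible set is convex, so any KKT point is globally optimal. The gradient is $\nabla\gL(\rvm) = -\hat{\rvw}_k + \sum_{k'}p_{k'}(\rvm)\hat{\rvw}_{k'}$ with $p_{k'}$ the softmax probability. The key structural identity is $\sum_{k'=1}^{K}\hat{\rvw}_{k'} = \vzero$ for any simplex ETF, which follows from Eq.~(\ref{ETF_M}) since $(\mathbf{I}_K - \frac{1}{K}\vone_K\vone_K^T)\vone_K = \vzero$. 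Evaluating at $\rvm = \hat{\rvw}_k$, the ETF inner-product condition makes $p_{k'}$ equal across all $k'\ne k$, so $\sum_{k'}p_{k'}\hat{\rvw}_{k'}$ reduces to a scalar multiple of $\hat{\rvw}_k$, and $\nabla\gL$ becomes parallel to the outward normal of the unit sphere at $\hat{\rvw}_k$. This certifies stationarity with a strictly positive Lagrange multiplier on the constraint $\|\rvm\|^2\le 1$.

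The main obstacle I anticipate is the CE case: unlike the one-line misalignment bound, it requires both the convexity machinery and the ETF identity $\sum_{k'}\hat{\rvw}_{k'}=\vzero$ to collapse the softmax-weighted sum into a multiple of $\hat{\rvw}_k$. A subtlety is that CE is not strictly convex in directions orthogonal to $\mathrm{span}\{\hat{\rvw}_{k'}\}$, but any perpendicular component would strictly increase $\|\rvm\|$ past $1$, so the ball constraint pins the minimizer into the span, where ETF symmetry fixes it to $\hat{\rvw}_k$. Once the per-sample optimum is in hand, the theorem follows by assembling $\hat{\rvm}^{(t)}_{k,i}=\hat{\rvw}_k$ across $i$, $k$, and $t$ into the columns of $\hat{\rmM}$, with the incremental-versus-joint equivalence coming for free from the decoupling.
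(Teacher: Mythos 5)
Your proposal is correct in substance and, for the cross-entropy case, takes a genuinely different route from the paper. The misalignment-loss part is identical to the paper's argument (the Cauchy--Schwarz/norm bound giving $\hat{\rvw}_k^T\rvm\le 1$ with equality iff $\rvm=\hat{\rvw}_k$), and both treatments rely on the same decoupling across samples and sessions and on the identity $\sum_{k'}\hat{\rvw}_{k'}=\mathbf{0}_d$. For CE, however, the paper works \emph{forward} from the KKT system: it rules out $\lambda_{k,i}=0$ by contradiction (forcing $\|\hat{\rvm}_{k,i}\|=1$), then manipulates the stationarity equation --- including a monotonicity argument on $e^x/x$ to show all off-class softmax probabilities coincide --- to conclude that \emph{any} KKT point, hence any global minimizer, equals $\hat{\rvw}_k$. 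You instead \emph{verify a certificate}: evaluate the gradient at the candidate $\rvm=\hat{\rvw}_k$, use ETF symmetry to collapse $\sum_{k'}p_{k'}\hat{\rvw}_{k'}$ onto $\hat{\rvw}_k$, and read off a positive multiplier. This is shorter and cleaner for establishing that $\hat{\rvw}_k$ \emph{is} a global minimizer, but the theorem asserts properties of \emph{the} global minimizer, so you owe a uniqueness argument that the paper's derivation gets for free. Your sketch of that step ("ETF symmetry fixes it to $\hat{\rvw}_k$") is the one under-justified point: symmetry alone does not force uniqueness. The clean fix is to note that the log-sum-exp term is \emph{strictly} convex on $S=\mathrm{span}\{\hat{\rvw}_{1},\dots,\hat{\rvw}_{K}\}$, because the Hessian $\hat{\rmW}_{\rm ETF}(\mathrm{diag}(p)-pp^T)\hat{\rmW}_{\rm ETF}^T$ annihilates $v$ only when $\hat{\rmW}_{\rm ETF}^Tv\propto\mathbf{1}_K$, which together with $\hat{\rmW}_{\rm ETF}\mathbf{1}_K=\mathbf{0}_d$ and $v\in S$ forces $v=\mathbf{0}_d$; hence the projection of any minimizer onto $S$ is uniquely $\hat{\rvw}_k$, and since $\|\hat{\rvw}_k\|=1$ the ball constraint leaves no room for an orthogonal component. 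With that step made explicit, your argument is a complete and somewhat more economical alternative to the paper's.
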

The proof of Theorem \ref{theorem} can be found in the Appendix. Eq. (\ref{theorem_eq}) indicates that the global minimizer $\hat{\rmM}$ of Eq. (\ref{obj}) reaches the neural collapse terminus, \emph{i.e.,} features of the same class collapse into a single vertex, and the vertices of all classes are aligned with $\hat{\rmW}_{\rm ETF}$ as a simplex ETF. 
More importantly, in problem Eq. (\ref{obj}), the number of classes $K^{(t)}$ for all $T+1$ sessions and the number of samples $n_k$ for all $K$ classes can be distributed in any way,
which corresponds to the challenging demands of LTCIL and FSCIL.

\subsection{A Generalized Case}
\label{generalized_case}


There are two implicit prerequisites for our methods. In Section \ref{etf_classifier}, our neural collapse terminus is based on the assumption that we know the number of total classes in advance. Our architecture for FSCIL slightly differs from the one for CIL and LTCIL, which means we also need to know whether the data distribution is normal, long-tail, or few-shot in advance. 
However, in real-world implementations, we may not know the total class number and the data distribution of the next session.
In this subsection, we deal with a generalized case where the two priors are unknown and show how to apply our method accordingly. 

When we cannot know the total class number, our method is still able to work well. We only need to initialize the neural collapse terminus with a very large number of class prototypes, denoted as $K_M$, such that the classes to be encountered in reality, denoted as $K$, are not possible to surpass $K_M$. The $K_M$ prototypes also have equiangular separation. The difference is that the pair-wise cosine similarity is increased from $-\frac{1}{K-1}$ to $-\frac{1}{K_M-1}$. When $K_M\rightarrow\infty$, the simplex ETF structure reduces to an orthogonal frame, whose margin is still enough to serve as a consistent target for incremental training. Actually, $-\frac{1}{K-1}$ and $-\frac{1}{K_M-1}$ are both close to $0$. Therefore, it almost does not affect the performance. In experiments, we pre-assign a neural collapse terminus with 1000 prototypes, but only use 100 of them for training on CIFAR-100 to simulate this case.

When the data distribution of an incremental problem is not specified, \emph{i.e.,} the data for the next session could possibly be any of the normal, long-tail, or few-shot cases, we adopt our architecture designed for FSCIL as shown in Figure~\ref{fig2}. For each incremental session, if the number of training data is larger than the few-shot case (\emph{e.g., 5 samples per class}), we fix the projection layer $g$ and only finetune the backbone network $f$ minimizing Eq. (\ref{loss_final}). Otherwise, we fix the backbone network $f$ and only finetune the projection layer $g$ solving Eq. (\ref{min_inc}). 
In this way, our method can be a unified solution regardless of data distribution. 
To simulate this case, for each incremental session, we first sample from the three cases with equal probability, \emph{i.e.,}
the choices of normal, long-tail, or few-shot distribution are equally possible,
and then randomly sample some of the remaining classes that have not been encountered. 

\begin{figure*}[!t]
	\centering
	\includegraphics[width=.32\linewidth]{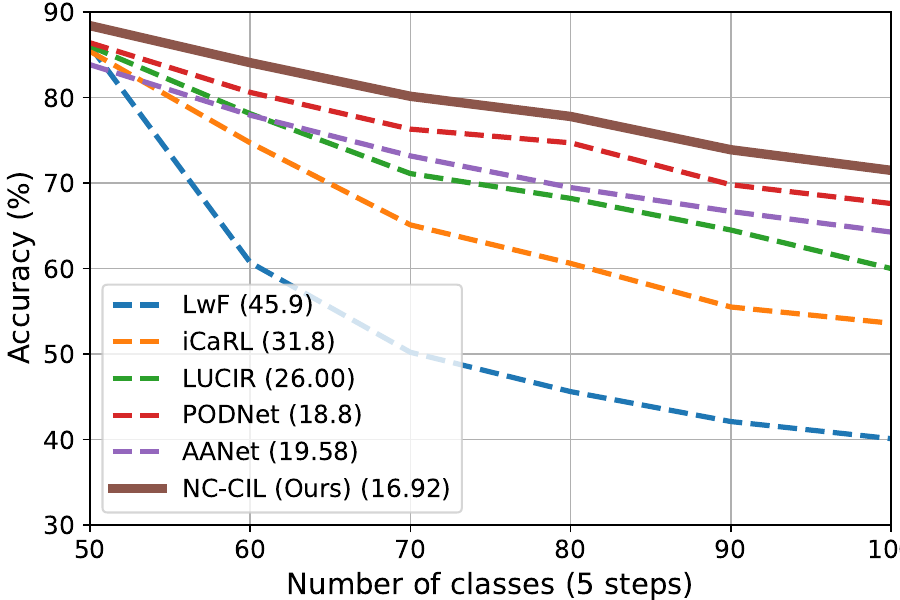}
	\includegraphics[width=.32\linewidth]{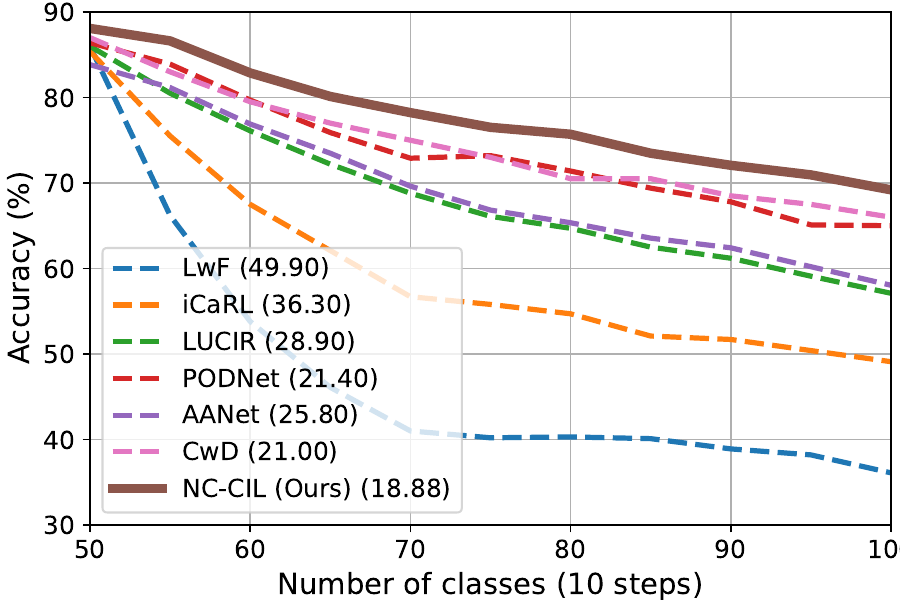}
	\includegraphics[width=.32\linewidth]{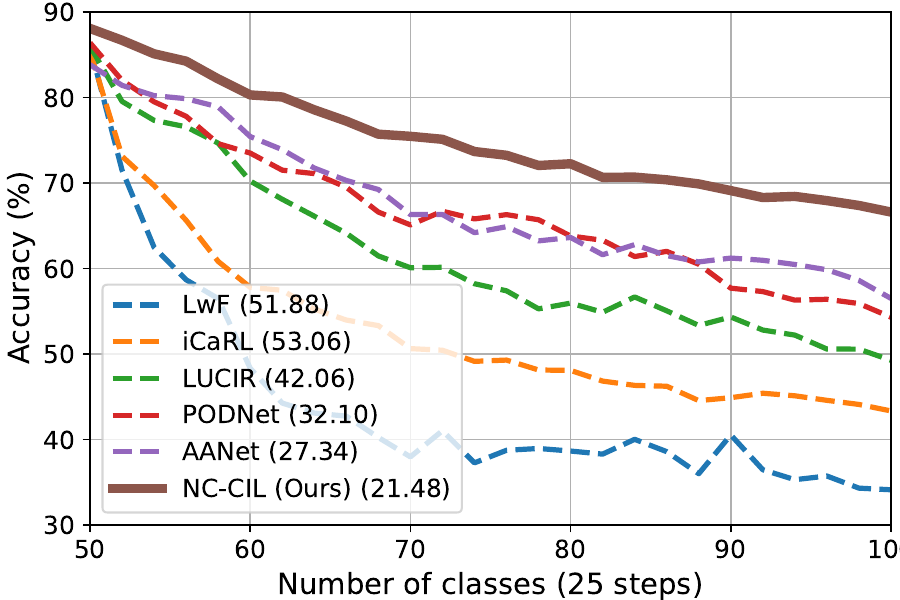}
	\caption{Comparison of the evaluation $A_t$ after each session on ImageNet-100. We compare our NC-CIL with LwF~\cite{li2016learning}, iCaRL~\cite{rebuffi2017icarl}, LUCIR~\cite{hou2019learning}, PODNet~\cite{douillard2020podnet}, AANet~\cite{liu2021adaptive}, and CwD~\cite{shi2022mimicking}. The results except for CwD~\cite{shi2022mimicking} are reproduced by~\cite{liu2021adaptive}. CwD results are estimated from their original paper~\cite{shi2022mimicking}. The \textbf{performance drop} (PD) is reported in the legend, which is the difference in performance between the base session and the final incremental session. A lower PD indicates better holding the capability on old classes. }
\label{fig:num}
\end{figure*}

\subsection{Implementations}
\label{implement}

Our method has a simple framework mainly focused on the neural collapse terminus. 
In order to highlight the efficacy of our method, for CIL and LTCIL,
we adopt a simple pipeline as LUCIR~\cite{hou2019learning}, which is widely used as the baseline method~\cite{shi2022mimicking,ashok2022class,liu2021adaptive}. 
The whole pipeline does \textbf{NOT} require dynamic architecture~\cite{liu2021adaptive, yan2021dynamically}, pseudo CIL task training~\cite{liu2021rmm}, advanced rehearsal strategy~\cite{liu2021rmm}, or more regularization term~\cite{zhou2022forward,shi2022mimicking}. 
We only adopt the most basic KD and rehearsal strategy in LUCIR~\cite{hou2019learning} for simplicity, which is the most challenging setup among the CIL studies. 
For FSCIL, we only introduce a memory of the intermediate feature means, which is widely used in existing FSCIL studies \cite{cheraghian2021semantic,chen2021incremental,akyurek2021subspace,hersche2022constrained,zhou2022forward,zhou2022few,peng2022few}. For the generalized case, we enable all these strategies (KD, rehearsal, and feature mean memory) because it could be either of the three cases during training.

\section{Experiments}


Although our method is integrated on the simple pipeline of LUCIR \cite{hou2019learning}, we will still compare with the state-of-the-art methods of the three tasks. Our training details are described in the Appendix. 

\subsection{Datasets and Evaluation Metric}

We validate our method on the standard benchmarks. 
For CIL, CIFAR-100 \cite{krizhevsky2009learning} is a relatively smaller dataset that contains (50,000 (train) / 10,000 (val)) 32x32 images evenly distributed in 100 classes. ImageNet-100 is a sub-dataset that randomly chooses 100 classes from the original ImageNet-1k~\cite{deng2009imagenet} in the same resolution. We follow previous works~\cite{hou2019learning,douillard2020podnet,liu2021adaptive,shi2022mimicking} that use a fixed seed (1993) to extract exactly the same 100 classes. ImageNet-1k contains (about 1,300 (train) / 50 (val)) images per class, and is with 1,000 classes. LTCIL~\cite{liu2022long} creates CIFAR100-LT and ImageNet100-LT to consider the long-tail data distribution. Specifically, LTCIL considers two settings, including \textit{Ordered LTCIL} and \textit{Shuffled LTCIL}. In Ordered LTCIL, the number of images decreases monotonically from the base session to the final incremental session. By contrast, Shuffled LT-CIL shuffles all the classes, and each session may have varying degrees of imbalance, which means that it is more challenging~\cite{liu2022long}.
For FSCIL, 
miniImageNet \cite{russakovsky2015imagenet} is a variant of ImageNet with a size of $84\times84$. It has 100 classes with each class containing 500 images for training and 100 images for testing. CIFAR-100 has the same number of classes and images, and the image size is $32\times32$. CUB-200 \cite{wah2011caltech} is a dataset for fine-grained image classification containing 11,788 images of 200 classes in a resolution of $224\times224$. There are 5,994 images for training and 5,794 images for testing. We follow the standard experimental settings in FSCIL \cite{tao2020few,zhang2021few}. For both miniImageNet and CIFAR-100, the base session ($t=0$) contains 60 classes, and a 5-way 5-shot (5 classes and 5 images per class) problem is adopted for each of the 8 incremental sessions ($1\le t\le8$). For CUB-200, 100 classes are used in the base session, and there are 10 incremental sessions, each of which is 10-way 5-shot. 
{For the generalized case, we test on CIFAR-100.}

\begin{table}[!t]
		\small
		\renewcommand\arraystretch{1.}
    \centering
        \caption{Comparison of {average incremental accuracy} on ImageNet-1k. The base session has an equal number of classes to each incremental session (e.g., ``10 steps'' means 100 classes for base or each incremental session). The number of exemplars for each class is 20. The results of previous studies are reproduced by~\cite{shi2022mimicking}.}
             \setlength{\tabcolsep}{8pt}
    \begin{tabular}{l|cc}
        \toprule
        \multirow{2}{*}{Method} & \multicolumn{2}{c}{ImageNet-1k}\\

        & 10 steps & 20 steps\\

        \midrule

        LwF~\cite{li2016learning} & 40.86 & 27.72\\
        iCaRL~\cite{rebuffi2017icarl} & 49.56 & 42.61\\
        LUCIR~\cite{hou2019learning} & 56.40 & 52.75\\
        PODNet~\cite{douillard2020podnet} & 57.01 & 54.06\\
        AANet~\cite{liu2021adaptive} & 51.76 & 46.86\\
        CwD~\cite{shi2022mimicking} & 58.18 & 56.01\\
        \midrule
        \textbf{NC-CIL (Ours)} & \textbf{58.21} & \textbf{57.58} \\
        \bottomrule
    \end{tabular}
    \label{tab:inet}
\end{table}

\begin{table*}[!ht]
	\renewcommand\arraystretch{1.}
    \centering
\small
    \caption{Comparison of {average incremental accuracy} on CIFAR100-LT and ImageNet100-LT. The base session is with 50 classes. The number of exemplars for each class is 20. The results of LUCIR~\cite{hou2019learning} and PODNet~\cite{douillard2020podnet} are reproduced by~\cite{liu2022long}. LT-CIL~\cite{liu2022long} is a two-stage method with LUCIR and PODNet baseline. Our NC-LTCIL has the same architecture as NC-CIL \textbf{without} any change or hyper-parameter tuning. $\uparrow$ indicates improvement.}
    \setlength{\tabcolsep}{8pt}
    {
    \begin{tabular}{lc|cccc|cccc}
        \toprule[0.1em]
        \multirow{2}{*}{Method} & \multirow{2}{*}{Mode} & \multicolumn{4}{c}{CIFAR100-LT ($\rho=0.01$)} & \multicolumn{4}{c}{ImageNet100-LT ($\rho=0.01$)}\\

        && 5 steps & $\uparrow$ & 10 steps & $\uparrow$ & 5 steps & $\uparrow$ & 10 steps & $\uparrow$\\

        \midrule

        LUCIR~\cite{hou2019learning} & \multirow{4}{*}{{Ordered}} & 42.69 & {\highlightnumber{+14.00}} & 42.15 & {\highlightnumber{+16.54}} & 52.91 & {\highlightnumber{+10.89}} & 52.80 & {\highlightnumber{+10.88}}\\

        PODNet~\cite{douillard2020podnet} &  & 44.07 & {\highlightnumber{+12.62}} & 43.96 & {\highlightnumber{+14.73}} & 58.78 & {\highlightnumber{+5.02}} & 58.94 & {\highlightnumber{+4.74}}\\

        LT-CIL~\cite{liu2022long} && 45.88 & {\highlightnumber{+10.81}} & 45.73 & {\highlightnumber{+12.96}} & 58.82 & {\highlightnumber{+4.98}} & 59.09 & {\highlightnumber{+4.59}}\\

        \textbf{NC-LTCIL (Ours)} && \textbf{56.69} & - & \textbf{58.69} & - & \textbf{63.80} & - & \textbf{63.68} & -\\

        \midrule

        LUCIR~\cite{hou2019learning} & \multirow{4}{*}{{Shuflled}} & 35.09 & {\highlightnumber{+10.54}}  & 34.59 & {\highlightnumber{+13.57}} & 45.80 & {\highlightnumber{+6.67}} & 46.52 & {\highlightnumber{+9.65}}\\

        PODNet~\cite{douillard2020podnet} &  & 34.64 & {\highlightnumber{+10.99}} & 34.84 & {\highlightnumber{+13.32}} & 49.69 & {\highlightnumber{+2.78}} & 51.05 & {\highlightnumber{+5.12}}\\

        LT-CIL~\cite{liu2022long} && 39.40 & {\highlightnumber{+6.23}} & 39.00 & {\highlightnumber{+9.16}} & 52.08 & {\highlightnumber{+0.39}} & 52.60 & {\highlightnumber{+3.57}}\\

        \textbf{NC-LTCIL (Ours)} && \textbf{45.63} & - & \textbf{48.16} & - & \textbf{52.47} & - & \textbf{56.17} & -\\
        \bottomrule[0.1em]
    \end{tabular}
}
    \label{tab:ltcil}
\end{table*}

 \begin{table*}[t] 
	\renewcommand\arraystretch{1.}
	\begin{center}
		\caption{Performance of FSCIL on miniImageNet and comparison with other studies. The top rows list class-incremental learning and few-shot learning results implemented by \cite{tao2020few,zhang2021few} in the FSCIL setting. ``Average Acc.'' is the average incremental accuracy. ``Final Improv.'' calculates the improvement of our method after the last session over prior studies. The last row lists the improvements of our method over ALICE \cite{peng2022few}, which is a strong baseline in FSCIL. 
		}
		\resizebox{\textwidth}{!}{
			\begin{tabular}{lccccccccccl}
				\toprule
				\multicolumn{1}{l}{\multirow{2}{*}{\bf Methods}} & \multicolumn{9}{c}{\bf Accuracy in each session (\%) $\uparrow$} & \bf Average & \bf Final \\ 
				\cmidrule{2-10}
				& \bf 0   & \bf 1      & \bf 2      & \bf 3    & \bf 4     & \bf 5  & \bf 6     & \bf 7      &\bf 8   & \bf Acc. & \bf Improv.  \\ 
				\midrule
				iCaRL~\cite{rebuffi2017icarl} &   61.31&	46.32&	42.94&	37.63&	30.49&	24.00&	20.89&	18.80&	17.21&	33.29 & \bf +41.1 \\
				NCM~\cite{hou2019learning}         & 61.31	&47.80&	39.30&	31.90&	25.70&	21.40&	18.70&	17.20&	14.17&	30.83& \bf +44.14 \\
				D-Cosine~\cite{vinyals2016matching}    &70.37&	65.45&	61.41&	58.00&	54.81&	51.89&	49.10&	47.27&	45.63&	55.99& \bf +12.68\\
				\midrule
				TOPIC \cite{tao2020few} & 61.31	& 50.09	& 45.17	& 41.16	& 37.48	& 35.52	& 32.19	& 29.46	& 24.42	& 39.64 & \bf  +33.89\\
				IDLVQ \cite{chen2021incremental} & 64.77 &	59.87	& 55.93	& 52.62	& 49.88	& 47.55	& 44.83	& 43.14	& 41.84	& 51.16 & \bf +16.47 \\
				Self-promoted~\cite{zhu2021self}  &	61.45	&	63.80	&	59.53	&	55.53	&	52.50	&	49.60	&	46.69	&	43.79	&	41.92	&	52.76&	\bf +16.39 \\
				CEC~\cite{zhang2021few}     &  72.00	&66.83&	62.97	&59.43	&56.70	&53.73	&51.19	&49.24	&47.63	&57.75 & \bf +10.68 \\
				LIMIT \cite{zhou2022few}&  72.32	& 68.47	& 64.30	& 60.78	& 57.95	& 55.07	& 52.70	& 50.72	& 49.19	& 59.06 &  \bf +9.12 \\
				Regularizer \cite{akyurek2021subspace} &80.37	&74.68	&69.39	&65.51	&62.38	&59.03	&56.36	&53.95	&51.73&	63.71& \bf +6.58 \\
				MetaFSCIL \cite{chi2022metafscil} &72.04&	67.94&	63.77&	60.29&	57.58&	55.16&	52.90&	50.79&	49.19&	58.85& \bf +9.12 \\
				C-FSCIL \cite{hersche2022constrained} &76.40&	71.14&	66.46&	63.29&	60.42&	57.46&	54.78&	53.11&	51.41&	61.61& \bf +6.90  \\
				Data-free Replay \cite{liu2022few} &71.84	&67.12	&63.21	&59.77	&57.01	&53.95	&51.55	&49.52	&48.21	&58.02& \bf +10.10\\
				ALICE \cite{peng2022few} &80.60	&70.60	&67.40	&64.50	&62.50	&60.00	&57.80	&56.80	&55.70	&63.99& \bf +2.61 \\
				\midrule
				\bf NC-FSCIL (ours) &  \bf 84.02&	\bf 76.80&	\bf 72.00&	\bf 67.83&	\bf 66.35	&\bf 64.04&	\bf 61.46&	\bf 59.54&	\bf 58.31&	\bf 67.82&  \\
				\midrule
				\emph{Improvement over ALICE} & +3.42	& +6.20&	+4.60&	+3.33&	+3.85&	+4.04&	+3.66&	+2.74&	+2.61&	{\color{purple} \textbf{+3.83}}&\\
				\bottomrule
			\end{tabular}
		}
		\label{table:imgnet}
	\end{center}
\end{table*}

The existing CIL~\cite{rebuffi2017icarl, hou2019learning, liu2021adaptive, shi2022mimicking} and LT-CIL~\cite{liu2022long} studies have made a consensus on the evaluation metric. We adopt the same metric, \textit{average incremental accuracy}, to evaluate our proposed method. Suppose that a model is trained on one base session and $T$ incremental sessions. The average incremental accuracy is defined as:
\begin{equation}
	A = \frac{1}{T + 1}\sum_{t=0}^{T}A_t,
\end{equation}
where $A_t$ is the top-1 accuracy on the unified label space $\mathcal{\hat{C}}^{(t)} = \cup_{i=0}^{t}\mathcal{C}^{(i)}$ after the training of session $t$. For FSCIL, we report the accuracy $A_t$ after each session and also the average incremental accuracy following common practice.

\subsection{Performance on Benchmarks}

\noindent
\textbf{Results on CIL.} We compare our method with the other CIL methods on the CIFAR-100 and ImageNet-100 datasets in Table~\ref{tab:com}. The recent methods with strong performances focus on dynamic architecture (DER~\cite{yan2021dynamically}, FOSTER~\cite{wang2022foster}), advanced distillation strategy (AFC~\cite{kang2022class}, CSCCT~\cite{ashok2022class}), and advanced regularization terms (CwD~\cite{wang2022foster}). It is shown that our method outperforms these state-of-the-art methods without these advanced strategies. 
In particular, our method outperforms the LUCIR baseline by 8.7\%, 9.9\%, and 12.2\% on ImageNet-100 for 5/10/25 incremental-session settings, respectively. We also surpass the strong results achieved by CwD~\cite{wang2022foster} on both CIFAR-100 and ImageNet-100. To compare the accuracy of each incremental session, we also report the accuracy of each session along with the previous methods in Figure~\ref{fig:num}. 
Our method has the lowest performance drop, which measures the degree of forgetting in incremental training. 
Note that the original per-session accuracies of some methods~\cite{ashok2022class, kang2022class} are not available. On ImageNet-1k, as shown in Table~\ref{tab:inet}, our method still outperforms the LUCIR baseline and the recent CwD~\cite{shi2022mimicking}, whose method is also integrated on LUCIR.

\noindent
\textbf{Results on LTCIL.} For LTCIL experiments, we directly apply our method of CIL to LTCIL under the same training setting without any change, and compare with the state-of-the-art two-stage method proposed in~\cite{liu2022long}. 
Note that so far there have not been a lot of studies on LTCIL other than \cite{liu2022long} due to its difficulty. The imbalance ratio $\rho=0.01$ is very challenging -- the minimum class only has 5 samples for CIFAR100-LT and 13 for ImageNet100-LT. Nonetheless, 
our method outperforms the CIL methods and \cite{liu2022long} by a significant margin, as shown in Table~\ref{tab:ltcil}.
On CIFAR100-LT, we surpass all the compared methods by more than 10\% on the ordered mode, and 5\% on the shuffled mode.
On ImageNet100-LT, we also have a more salient advantage under the ordered mode. This may be because our method is more advantageous in alleviating the old-class forgetting due to the better feature-classifier alignment, which means enough training data in the earlier sessions will benefit our method to better keep the capability on old classes. 
We leave exploring this phenomenon and designing a more adaptive method or training strategy for future work to further advance the LTCIL task.

\begin{table*}[!ht] 
	\renewcommand\arraystretch{1.}
	\begin{center}
		\caption{Performance of FSCIL on CIFAR-100 and comparison with other studies. 
		}
		\resizebox{\textwidth}{!}{
			\begin{tabular}{lccccccccccl}
				\toprule
				\multicolumn{1}{l}{\multirow{2}{*}{\bf Methods}} & \multicolumn{9}{c}{\bf Accuracy in each session (\%) $\uparrow$} & \bf Average & \bf Final \\ 
				\cmidrule{2-10}
				& \bf 0   & \bf 1      & \bf 2      & \bf 3    & \bf 4     & \bf 5  & \bf 6     & \bf 7      &\bf 8   & \bf Acc. & \bf Improv.  \\ 
				\midrule
				iCaRL~\cite{rebuffi2017icarl} &   64.10	&53.28&	41.69&	34.13&	27.93&	25.06&	20.41&	15.48&	13.73&	32.87& \bf +42.38\\
				NCM~\cite{hou2019learning}        &64.10	&53.05	&43.96	&36.97	&31.61	&26.73	&21.23	&16.78	&13.54	&34.22& \bf +42.57\\
				D-Cosine~\cite{vinyals2016matching}    &74.55	&67.43	&63.63	&59.55	&56.11	&53.80	&51.68	&49.67	&47.68	&58.23& \bf +8.43\\
				\midrule
				TOPIC \cite{tao2020few} & 64.10	&55.88	&47.07	&45.16	&40.11	&36.38	&33.96	&31.55	&29.37&	42.62& \bf +26.74 \\
				
				Self-promoted~\cite{zhu2021self}  &	64.10	&65.86	&61.36	&57.45	&53.69	&50.75	&48.58	&45.66	&43.25	&54.52& \bf +12.86 \\
				
				CEC~\cite{zhang2021few}     & 73.07	&68.88	&65.26	&61.19	&58.09	&55.57	&53.22	&51.34	&49.14	&59.53& \bf +6.97\\
				DSN \cite{yang2022dynamic} & 73.00	&68.83	&64.82	&62.64	&59.36	&56.96	&54.04	&51.57	&50.00	&60.14& \bf +6.11 \\
				LIMIT \cite{zhou2022few}&  73.81	&72.09	&67.87	&63.89	&60.70	&57.77	&55.67	&53.52	&51.23	&61.84& \bf +4.88 \\
				
				MetaFSCIL \cite{chi2022metafscil} &74.50	&70.10	&66.84	&62.77	&59.48	&56.52	&54.36	&52.56	&49.97	&60.79& \bf +6.14 \\
				C-FSCIL \cite{hersche2022constrained} &77.47	&72.40	&67.47	&63.25	&59.84	&56.95	&54.42	&52.47	&50.47	&61.64& \bf + 5.64\\
				Data-free Replay \cite{liu2022few} &74.40	&70.20	&66.54&	62.51	&59.71	&56.58	&54.52&	52.39	&50.14	&60.78& \bf +5.97\\
				ALICE \cite{peng2022few} &79.00	&70.50	&67.10	&63.40	&61.20	&59.20	&58.10	&56.30	&54.10	&63.21& \bf +2.01\\
				\midrule
				\bf NC-FSCIL (ours) & \bf 82.52	&\bf 76.82	&\bf 73.34	&\bf 69.68	&\bf 66.19	&\bf 62.85	&\bf 60.96	&\bf 59.02	&\bf 56.11	&\bf 67.50 &\\
				\midrule
				\emph{Improvement over ALICE} & {+3.52}&	+6.32&	+6.24&	+6.28&	+4.99&	+3.65&	+2.86&	+2.72&	+2.01&	{\color{purple} \textbf{+4.29}}&\\
				\bottomrule
			\end{tabular}
		}
		\label{table:cifar}
	\end{center}
\end{table*}

\begin{table*}[!ht]
	\renewcommand\arraystretch{1}
	\centering
	\tiny
	\caption{Performance of the generalized case (GCIL) on CIFAR-100 by repeating experiments four times with different sampled setups for 10 and 25 steps, respectively. We re-implement the representative studies in CIL (LUCIR \cite{hou2019learning} and AANet \cite{liu2021adaptive}), LTCIL (LT-CIL \cite{liu2022long}), and FSCIL (ALICE \cite{peng2022few}) in the generalized case. ``Last'' and ``Avg'' refer to the accuracy after the last session and the average incremental accuracy, respectively. The detailed setups are shown in the Appendix.}
	\subfloat[10 steps]{
		\resizebox{\textwidth}{!}{
			\begin{tabular}{l|cc|cc|cc|cc}
				\toprule[0.1em]
				\multirow{2}{*}{Method} & \multicolumn{2}{c|}{exp-1} & \multicolumn{2}{c|}{exp-2}& \multicolumn{2}{c|}{exp-3} & \multicolumn{2}{c}{exp-4}\\
				
				& Last (\%) & Avg (\%) & Last (\%) & Avg (\%) & Last (\%) & Avg (\%)  & Last (\%) & Avg (\%) \\
				
				\midrule
				
				LUCIR \cite{hou2019learning}& 44.0 &  52.7 & 16.0 & 56.4 & 6.7 & 49.2 & 13.6 & 52.1 \\
				
				AANet \cite{liu2021adaptive}& 47.7 &  56.8  & 23.9 & 56.9  & 19.3 & 53.3  & 20.7 & 53.3\\
				
				LT-CIL \cite{liu2022long}& 49.3 &  65.2  & 54.8 & 66.6  & 53.3 & 65.8 & 54.8 & 66.6 \\
				
				ALICE \cite{peng2022few}& 11.3 &  23.3  & 11.3 & 23.6  & 11.8 & 23.6  & 12.0 & 24.4 \\
				
				{\textbf{NC-GCIL (Ours)}} & \textbf{54.3} &   \textbf{67.0} &  \textbf{57.9} &  \textbf{68.1} &  \textbf{58.6} &   \textbf{67.2} &  \textbf{58.4} &  \textbf{68.7} \\
				\bottomrule[0.1em]
			\end{tabular}
		}
	}
	\vfill
	\vspace{2mm}
	\subfloat[25 steps]{
		\resizebox{\textwidth}{!}{
			\begin{tabular}{l|cc|cc|cc|cc}
				\toprule[0.1em]
				\multirow{2}{*}{Method} & \multicolumn{2}{c|}{exp-5} & \multicolumn{2}{c|}{exp-6}& \multicolumn{2}{c|}{exp-7} & \multicolumn{2}{c}{exp-8}\\
				
				& Last (\%) & Avg (\%) & Last (\%) & Avg (\%) & Last (\%) & Avg (\%) & Last (\%) & Avg (\%) \\
				\midrule
				LUCIR   \cite{hou2019learning}& 40.4 &  56.5  & 21.7 & 55.2 & 6.4 & 50.6 & 50.4 & 57.8 \\
				
				AANet \cite{liu2021adaptive}& 45.4 &  56.9  & 25.8 & 51.0 & 44.9 & 53.6  & 49.1 & 57.2 \\
				
				LT-CIL \cite{liu2022long}& 44.8 &  63.1 & 49.4 & 64.5  & 1.0 & 62.0 & 48.4 & 64.3 \\
				
				ALICE \cite{peng2022few}& 11.6 &  20.3 & 12.9 & 20.6  & 12.4 & 21.1  & 12.2 & 21.0 \\
				
				\textbf{NC-GCIL (Ours)} & \textbf{54.6} &  \textbf{66.4}  & \textbf{54.4} & \textbf{66.3}  & \textbf{50.1} & \textbf{65.6}  & \textbf{56.3} & \textbf{67.6} \\
				
				\bottomrule[0.1em]
			\end{tabular}
		}
	}
	\label{tab:generalized}
\end{table*}

\noindent
\textbf{Results on FSCIL.} Our experiment results on minImageNet and CIFAR-100 are shown in Table~\ref{table:imgnet} and Table~\ref{table:cifar}, respectively. The result on CUB-200 is shown in the Appendix.
We see that our method achieves the best performance in all sessions on both miniImageNet and CIFAR-100 compared with previous studies. ALICE \cite{peng2022few} is a recent study that achieves strong performances on FSCIL. Compared with this challenging baseline, we have an improvement of 2.61\% in the last session on miniImageNet, and 2.01\% on CIFAR-100. We achieve an average accuracy improvement of more than 3.5\% on both miniImageNet and CIFAR-100. Although we do not surpass ALICE in the last session on CUB-200, we still have the best average incremental accuracy among all methods. As shown in the last rows of Table \ref{table:imgnet} and Table \ref{table:cifar}, the improvement of our method is able to last and even becomes larger in the first several sessions. It indicates that our method has the ability to hold the superiority and relieve the forgetting on old  knowledge.

\begin{table*}[th]
     \caption{Ablation studies on ImageNet-100 for CIL. We compare the {average incremental accuracy} in different settings. Please refer to each paragraph in Section~\ref{sec-ablation-cil} for detailed descriptions and analyses. The lines with gray background indicate our default settings.}
     \subfloat[ablation study on $\lambda$.]{
        \begin{tabular}{c|ccc}
        \toprule[0.15em]
        $\lambda$ & \footnotesize{5 steps} & \footnotesize{10 steps} & \footnotesize{25 steps}\\
        \midrule
        1 & 77.92 & 74.42 & 72.63\\
        2 & 79.07 & 76.33 & 74.93\\
        \rowcolor{gray!15} 5 & 79.29 & 77.62 & 74.97\\
        10 & 77.92 & 77.32 & 74.94\\
        \bottomrule[0.1em]
        \end{tabular}
        \label{tab:ablation_a}
     }
     \hfill
     \subfloat[ablation study on loss functions.]{
     	\setlength{\tabcolsep}{5pt}
        \begin{tabular}{c|ccc}
        \toprule[0.15em]
        $\mathcal{L}$ & \footnotesize{5 steps} & \footnotesize{10 steps} & \footnotesize{25 steps}\\
        \midrule
        {Learnable}~\cite{hou2019learning} & 66.27 & 60.80 & 52.96\\
        \footnotesize{$\gL_{\rm{CE}}$+$\gL_{\rm{Dot}}$} & 77.27 & 73.50 & 72.06\\    
        \footnotesize{$\gL_{\rm{CE}}$+$\gL_{\rm{distill}}$} & 77.78 & 77.29 & 74.95\\    
        \rowcolor{gray!15} \footnotesize{$\gL_{\rm{align}}$+$\gL_{\rm{distill}}$} & 79.29 & 77.62 & 74.97\\  
        \bottomrule[0.1em]
        \end{tabular}
        \label{tab:ablation_b}
     }
     \hfill
     \subfloat[ablation study on classifier choice.]{
     	     	\setlength{\tabcolsep}{4pt}
        \begin{tabular}{c|ccc}
        \toprule[0.15em]
        Method & \footnotesize{5 steps} & \footnotesize{10 steps} & \footnotesize{25 steps}\\
        \midrule
        {Learnable~\cite{hou2019learning}} & 66.27 & 60.80 & 52.96\\
        \footnotesize{NCM only} & 62.72 & 56.22 & 56.99\\
        \footnotesize{NCT only} & 78.16 & 75.98 & 73.59\\
        \rowcolor{gray!15} \footnotesize{Flying to Collapse} & 79.29 & 77.62 & 74.97\\ 
        \bottomrule[0.1em]
        \end{tabular}
        \label{tab:ablation_c}
     }\hfill
     \vspace{2mm}

     \hspace{12mm}\subfloat[orthonogal frame (OF) v.s. simplex ETF.]{
    	\begin{tabular}{l|ccc}
		\toprule[0.15em]
		Method & 5 steps & 10 steps & 25 steps\\
		\midrule
		OF only & 76.23 & 74.32 & 72.19\\
		FtC with OF & 78.03 & 76.05 & 73.31\\
		\rowcolor{gray!15} FtC with NCT  & 79.29 & 77.62 & 74.97 \\
		\bottomrule[0.1em]
     	\end{tabular}
     	\label{tab:ablation_ortho}
     }\hfill  
      \subfloat[class prototype number.]{
    	\begin{tabular}{l|ccc}
		\toprule[0.15em]
		Method & 5 steps & 10 steps & 25 steps\\
		\midrule
		\rowcolor{gray!15} {NCT with 100 prototypes} & 79.29 & 77.62 & 74.97\\
		{NCT with 500 prototypes} &78.85 & 77.58 & 73.54\\
		{NCT with 1000 prototypes} & 78.32 & 77.01 & 74.66 \\
		\bottomrule[0.1em]
		\end{tabular}
     	\label{tab:ablation_classnum}
     }
     \hspace{10mm}
     \label{tab:ablation}
\end{table*}

\begin{table*}[t]
	\begin{center}
		\caption{Ablation studies on three datasets for FSCIL. ``Learnable+$\gL_{\rm{CE}}$'' uses a learnable classifier and the CE loss; ``NCT+$\gL_{\rm{CE}}$'' adopts our neural collapse terminus with the CE loss; ``NCT+$\gL_{\rm{align}}$'' uses both neural collapse terminus and the misalignment loss. ``{Final}'' refers to the accuracy after the last session; ``{Average}'' is the average incremental accuracy; ``PD'' denotes the performance drop, \emph{i.e.,} the accuracy difference between the first and the last sessions.}
		\setlength{\tabcolsep}{6pt}
		{
			\begin{tabular}{l|ccc|ccc|ccc}
				\toprule[0.15em]
				\multicolumn{1}{l}{\multirow{2}{*}{Methods}} & \multicolumn{3}{c}{miniImageNet} & \multicolumn{3}{c}{CIFAR-100} & \multicolumn{3}{c}{CUB-200}\\
				& {Final}$\uparrow$ & {Average}$\uparrow$ & {PD}$\downarrow$ & {Final}$\uparrow$ & {Average}$\uparrow$ & {PD}$\downarrow$ & {Final}$\uparrow$ & {Average}$\uparrow$ & {PD}$\downarrow$\\
				\midrule 
				Learnable+$\gL_{\rm{CE}}$  & 50.04 & 61.30 & 34.53 & 52.13 & 62.68 & 30.14 & 50.38 & 59.58 & 29.19 \\
				NCT+$\gL_{\rm{CE}}$ & 56.66 & 68.23 & 28.21 & 54.42 & 64.00 & 27.36 & 56.83 & 65.51 & 23.27 \\
				\rowcolor{gray!15} NCT+$\gL_{\rm{align}}$ (ours) & 58.31 & 67.82 &  25.71 & 56.11 & 67.50 & 26.41 & 59.44 & 67.28 & 21.01\\
				\bottomrule
			\end{tabular}
		}
		\label{ablation}
	\end{center}
\end{table*}

\noindent
\textbf{Results on GCIL.} 
We compare our method with the representative methods in CIL, LTCIL, and FSCIL in the generalized case (GCIL). Considering data distribution in each new session could be any of the three cases by sampling, we repeat experiment multiple times. The detailed sampled setups are recorded in the Appendix. As shown in Table \ref{tab:generalized}, our method achieves the best last and average accuracies in all the 8 experiments. Because the compared methods are specifically designed for CIL, LTCIL, or FSCIL, they fail (less than 20\%) in the last sessions of some experiments, such as LUCIR \cite{hou2019learning} in exp-3, AANet \cite{liu2021adaptive} in exp-3, and LT-CIL \cite{liu2022long} in exp-7. ALICE \cite{peng2022few} fails in the all experiments, which may be due to their fixing the model for incremental sessions. LT-CIL \cite{liu2022long} adopts a two-stage method so consumes more train time than the other methods in all the experiments, as shown in the Appendix. 
By contrast, our NC-GCIL, as the first one-stage solution to LTCIL, adopts a simple framework of LUCIR, but still achieves the best results in all the experiments, which further corroborates the generalizability of our unified solution to all the three class incremental learning problems. Moreover, in these experiments, we adopt a neural collapse terminus of 1,000 prototypes but only use 100 of them to train on CIFAR-100, which verifies the feasibility of our method in the case when we do not know the total class number of a problem.

\subsection{Ablation Studies}
\label{sec-ablation}

We conduct ablation studies for both CIL and FSCIL
to delve into the effectiveness of our proposed method.

\subsubsection{Ablation for CIL}
\label{sec-ablation-cil}

\noindent
\textbf{Hyper-parameter $\lambda$.}
In Eq.~(\ref{loss_final}), our final loss incorporates $\lambda$ to balance between $\gL_{\rm{align}}$ and $\gL_{\rm{distill}}$. We conduct an ablation on the effect of $\lambda$, as shown in Table~\ref{tab:ablation_a}.
Similar to LUCIR~\cite{hou2019learning}, we find that $\lambda = 5$ also works well in our framework ($\lambda$ here is actually the $\lambda_{base}$ in LUCIR~\cite{hou2019learning}). 
The experiment in Table~\ref{tab:ablation_a} indicates that the whole framework is not sensitive to the choice of $\lambda$. From $\lambda=2$ to $\lambda=10$, our method has stable performances and outperforms most of the state-of-the-art results in Table~\ref{tab:com}.

\noindent
\textbf{Loss Choice.} As mentioned in Section~\ref{cil-training}, we adopt the misalignment loss~\cite{yang2022we} for both aligning backbone features by Eq. (\ref{dr}) and distillation by Eq. (\ref{dr_dist}).
We test the different choices of loss function for classification and distillation. 
In LUCIR~\cite{hou2019learning}, they use Cross-Entropy Loss ($\gL_{\rm{CE}}$) for classification and DotLoss ($\gL_{\rm{Dot}}$) for distillation. 
Accordingly, we test the models including: 
1) the same loss choice as LUCIR ($\gL_{\rm{CE}}$+$\gL_{\rm{Dot}}$) implemented by our method; 
2) CELoss for classification, our misalignment loss Eq. (\ref{dr_dist}) for distillation ($\gL_{\rm{CE}}$+$\gL_{\rm{distill}}$); 
3) misalignment loss for both classification and distillation ($\gL_{\rm{align}}$+$\gL_{\rm{distill}}$), which is our default choice.  
As shown in Table~\ref{tab:ablation_b}, using our adopted misalignment loss for both classification and distillation achieves the best result. All the three choices achieve significantly better performances than the LUCIR baseline whose classifier is learnable (the first row), which indicates that the main performance gain of our method is attributed to the proposed neural collapse terminus and flying to collapse. 

\noindent
\textbf{Classifier Choice.} We study different classifier strategies in Table~\ref{tab:ablation_c}-\ref{tab:ablation_classnum}. Specifically, in Table \ref{tab:ablation_c}, we consider: 1) NCM only, \emph{i.e.,} $\eta=0$ in Eq.~(\ref{fly}); 2) NCT only without flying to collapse (FtC), \emph{i.e.,} $\eta=1$ in Eq.~(\ref{fly}); 3) FtC (gradually evolving from NCM to NCT as our method introduced in Section~\ref{cil-fly}).
It is shown that using ``NCT only'' largely improves the performance over the learnable classifier in~\cite{hou2019learning}. When evolving the classifier from NCM to NCT to drive the backbone features smoothly, we further make a better result, demonstrating the importance of FtC. In Table~\ref{tab:ablation_ortho}, we compare the classifier geometric structure between an orthogonal frame and a simplex equiangular tight frame. It is shown that ''NCT only`` (without FtC) is better than ``OF only'', and ``FtC with NCT'' is better than ``FtC with OF'', which means a larger margin for the preassigned feature space partition helps class incremental learning. In Table \ref{tab:ablation_classnum}, we ablate the number of class prototypes. When we do not know the total class number for a class incremental learning problem, as introduced in Section~\ref{generalized_case}, we can initialize an NCT with a large number of prototypes such that the classes to be encountered in reality will not surpass what we have preassigned. It is shown that when our NCT has much more prototypes, the performance only diminishes within a small scope, which validates the feasibility of this strategy. 

\begin{figure*}[!ht]
	\centering
	\begin{subfigure}[b]{1.\linewidth}
		\centering
		\includegraphics[width=0.975\linewidth]{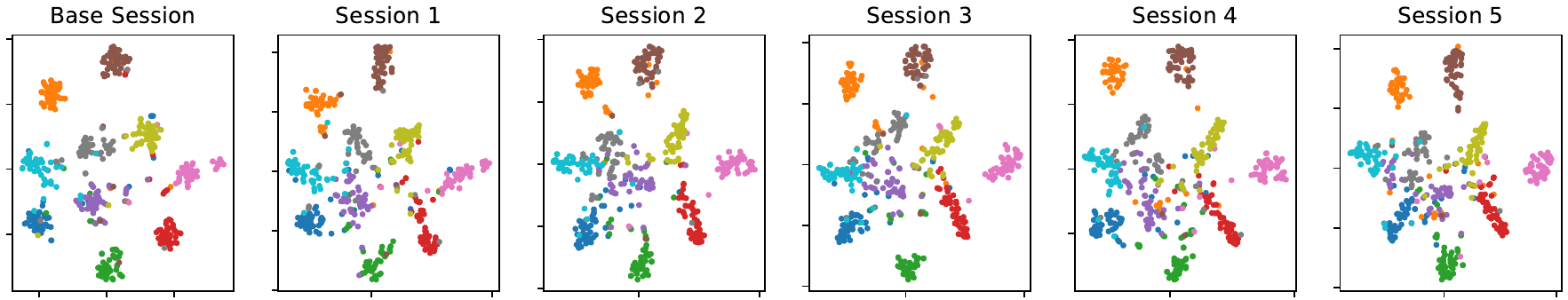}
	\end{subfigure}
	\begin{subfigure}[b]{1.\linewidth}
		\centering
		\includegraphics[width=0.975\linewidth]{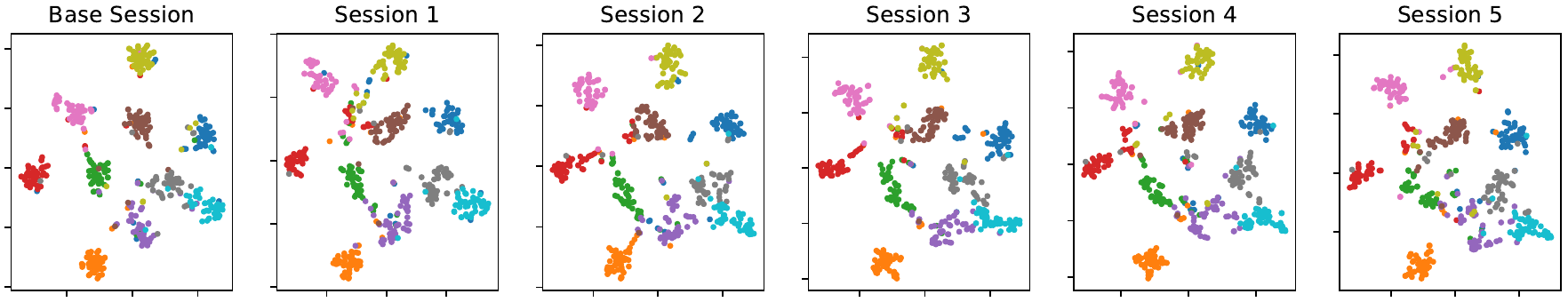}
	\end{subfigure}
	\caption{The t-SNE~\cite{van2008visualizing} visualization results for the features of the base session classes using ImageNet-100 validation set in ``CIL with 5 steps''. {Up}: LUCIR~\cite{hou2019learning}. {Down}: our NC-CIL. We use the same t-SNE settings for fair comparison.}
	\label{fig:tsne}
	\vspace{-1mm}
\end{figure*}

\begin{figure*}[!ht]
	\begin{subfigure}{.245\textwidth}
		\includegraphics[width=0.95\linewidth]{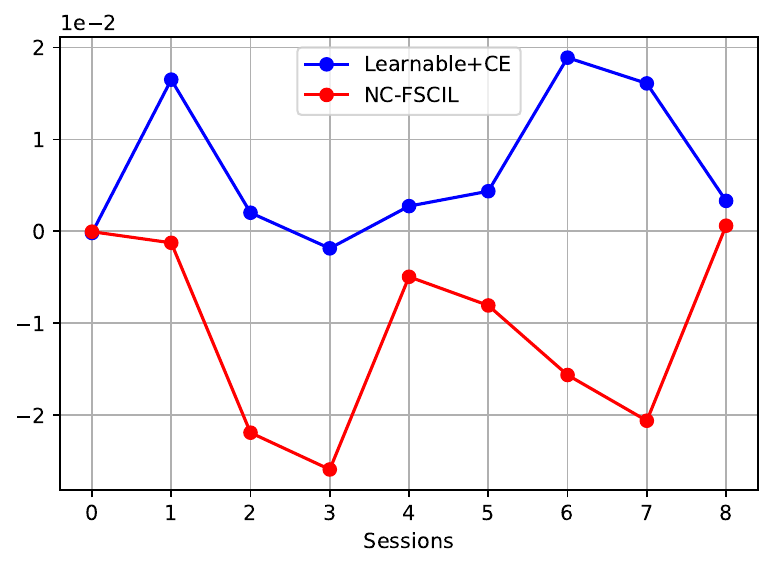}  
		\caption{train (each)}
		\label{fig:sub-first}
	\end{subfigure}
	\begin{subfigure}{.245\textwidth}
		\includegraphics[width=0.95\linewidth]{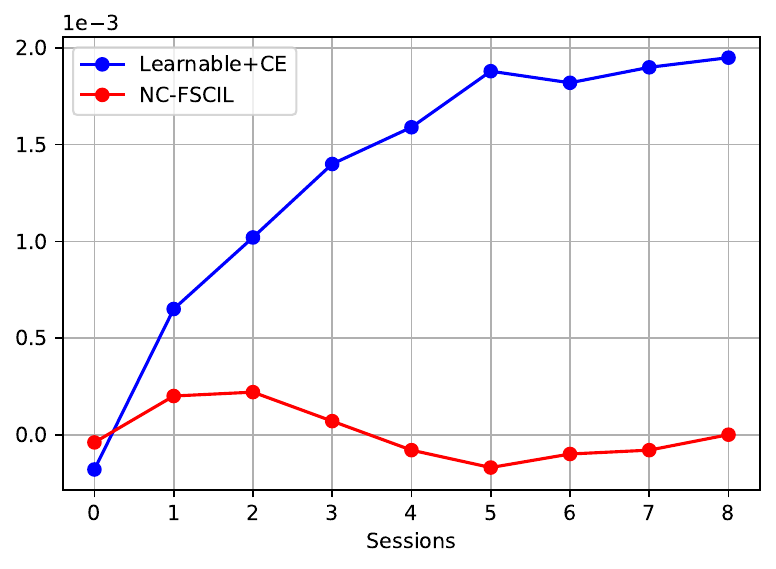}  
		\caption{train (accumulate)}
		\label{fig:sub-second}
	\end{subfigure}
	\begin{subfigure}{.245\textwidth}
		\includegraphics[width=0.95\linewidth]{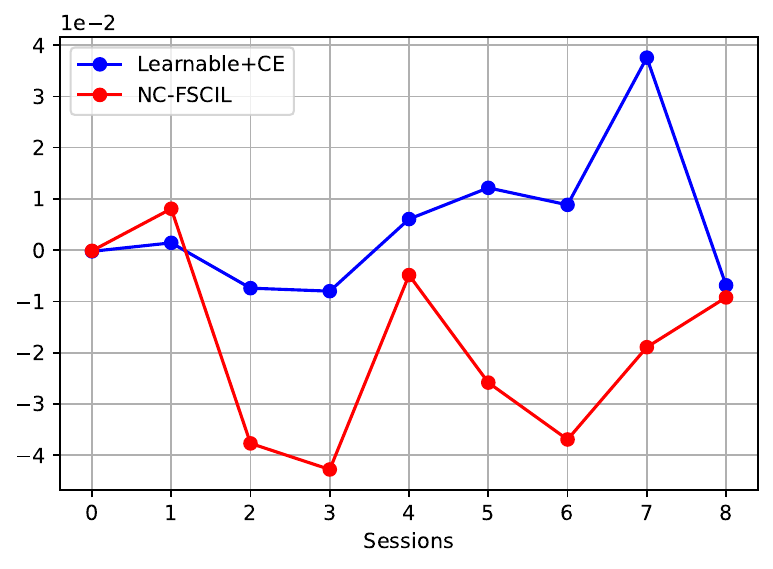}  
		\caption{test (each)}
		\label{fig:sub-third}
	\end{subfigure}
	\begin{subfigure}{.245\textwidth}
		\includegraphics[width=0.95\linewidth]{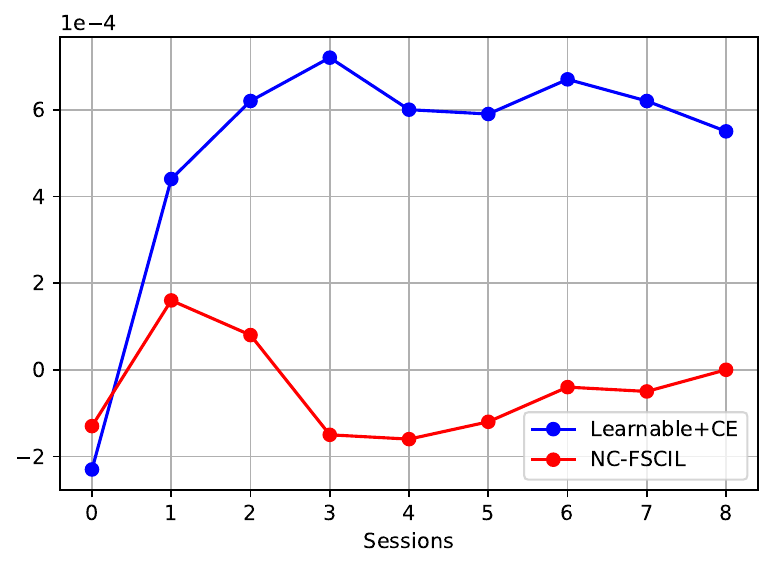} 
		\caption{test (accumulate)}
		\label{fig:sub-fourth}
	\end{subfigure}
	\caption{Average cosine similarity between features and classifier prototypes of different classes, \emph{i.e.,} $\mathrm{Avg}_{k\ne k'}\{\cos\angle(\rvm_{k}-\rvm_{G}, {\rvw}_{k'}) \}$, where $\rvm_{k}$ is the within-class feature mean of class $k$, $\rvm_{G}$ denotes the global mean, and ${\rvw}_{k'}$ is the classifier prototype of class $k'$. Statistics are performed among classes of each session (a and c), and all the encountered classes by the current session (b and d), on train set (a and b) and test set (c and d), for our NC-FSCIL on miniImageNet.}
	\label{fig:fig}
\end{figure*}

\subsubsection{Ablation for FSCIL}
We consider three models to validate the effects of NCT and misalignment loss. All three models are based on the same framework introduced in Section~\ref{method_FSCIL}, including the backbone network, the projection layer, and the memory module. The first model uses a learnable classifier and the CE loss, which is the most adopted practice. The second model only replaces the classifier with our NCT and still uses the CE loss. The third model corresponds to our method using both NCT and the misalignment loss $\gL_{\rm{align}}$ in Eq. (\ref{dr}). As shown in Table~\ref{ablation}, when our NCT is used, the final session accuracies are significantly better, and the performance drops get much mitigated. Adopting the misalignment loss $\gL_{\rm{align}}$ 
is able to further moderately improve the performances. {It indicates that the success of our method for FSCIL is largely attributed to the NCT and misalignment loss, as they pre-assign a neural collapse inspired consistent target and drive backbone features towards this optimality throughout incremental training, respectively}. 

\subsection{Analysis}

We visualize the features of the base session classes after each session's training in CIL. As shown in Figure~\ref{fig:tsne}, in our baseline LUCIR, as incremental training goes on, the separated clusters are getting interfaced. In the last session, it is hard to differentiate among the classes by their locations on the plane. By contrast, there is still a clear margin between any two clusters in the last session of our method. It is in line with our ability in mitigating the forgetting on old classes. 


We also investigate the feature-classifier status in our NC-FSCIL and the one using a learnable classifier and the CE loss.
As shown in Figure \ref{fig:fig}, the average cosine similarities between features and classifier prototypes of different classes, \emph{i.e.,} $\mathrm{Avg}_{k\ne k'}\{\cos\angle(\rvm_{k}-\rvm_{G}, {\rvw}_{k'}) \}$, of our method are consistently lower than those of the baseline. Most values of our method are negative and close to 0, which corresponds to the guidance from neural collapse as derived in Eq. (\ref{theorem_eq}). 
Particularly in Figures~\ref{fig:sub-second} and \ref{fig:sub-fourth}, the cosine similarities averaged
among all encountered classes increase fast with session for the baseline method, while ours keeps relatively flat. It indicates that the baseline method reduces the feature-classifier margin of different classes as training incrementally, but our method enjoys a stabler between-class separation. 
In the Appendix, 
we also calculate the average cosine similarities between features and classifier prototypes of the same class, \emph{i.e.,} $\mathrm{Avg}_{k}\{\cos\angle(\rvm_{k}-\rvm_{G}, {\rvw}_{k}) \}$, and the trace ratio of within-class covariance to between-class covariance, ${{\rm tr}(\Sigma_W)}/{{\rm tr}(\Sigma_B)}$. These results together support that our method better holds the feature-classifier alignment and relieves the forgetting problem.

\section{Conclusion}

In this paper, we point out a shared obstacle, misalignment dilemma, for the three class incremental learning problems including the cases of normal, long-tail, and few-shot data distribution. Inspired by neural collapse, we propose a unified solution that pre-assigns an optimal feature-classifier status as instructed by neural collapse for the whole label space, such that all sessions in incremental training have a consistent target to avoid dividing feature space incrementally. We develop a prototype evolving strategy and adopt a novel misalignment loss to further mitigate the catastrophic forgetting problem. Our method works for both CIL and LTCIL without any change on the architecture or training setting, and only needs minor adaptations for FSCIL. We further develop a generalized case that disables the prior knowledge of data distribution and the total class number. In experiments, our method improves significantly over the state-of-the-art methods on all the three tasks and the more challenging generalized case.

\ifCLASSOPTIONcaptionsoff
  \newpage
\fi



\bibliographystyle{IEEEtran}
\bibliography{UniCIL-TPAMI}

%



%

\appendices

\section{Proof of Theorem \ref{theorem}}
\label{proof}

We consider the following problem,
\begin{align}
	\label{obj_app}
	\min_{\rmM^{(t)}}\quad & \frac{1}{N^{(t)}}\sum^{K^{(t)}}_{k=1}\sum^{n_k}_{i=1} \gL\left(\rvm^{(t)}_{k,i},\hat{\rmW}_{\rm ETF}\right), \ 0\le t \le T,  \\
	s.t. \quad & \| \rvm^{(t)}_{k,i} \|^2 \le 1, \quad \forall 1\le k \le K^{(t)},\ 1\le i \le n_k, \notag
\end{align}
where $\rvm^{(t)}_{k,i}\in\R^d$ denotes a feature variable that belongs to the $i$-th sample of class $k$ in session $t$, $n_k$ is number of samples in class $k$, $K^{(t)}$ is number of classes in session $t$, $N^{(t)}$ is the number of samples in session $t$, \emph{i.e.,} $N^{(t)}=\sum_{k=1}^{K^{(t)}}n_k$, and $\rmM^{(t)}\in\R^{d\times N^{(t)}}$ denotes a collection of $\rvm^{(t)}_{k,i}$. $\hat{\rmW}_{\rm ETF}\in\R^{d\times K}$ refers to the neural collapse terminus for the whole label space, where $K=\sum_{t=0}^TK^{(t)}$. 
We re-state the Theorem \ref{theorem} in our paper:
\begin{theorem*}
	Let $\hat{\rmM}^{(t)}$ denotes the global minimizer of Eq. (\ref{obj_app}) by optimizing the model incrementally from $t=0$, and we have $\hat{\rmM}=[\hat{\rmM}^{(0)}, \cdots, \hat{\rmM}^{(T)}]\in\R^{d\times \sum_{t=0}^T N^{(t)}}$. No matter if $\gL$ in Eq. (\ref{obj_app}) is CE or misalignment loss, for any column vector $\hat{\rvm}_{k,i}$ in $\hat{\rmM}$ whose class label is $k$, we have:
	\begin{equation}
		\label{theorem_eq_app}
		\|\hat{\rvm}_{k,i}\|=1,\ \ \hat{\rvm}^T_{k,i}\hat{\rvw}_{k'}=\frac{K}{K-1}\delta_{k,k'}-\frac{1}{K-1},
	\end{equation}
	for all $k, k'\in[1,K],\ 1\le i \le n_k$, where $K=\sum_{t=0}^TK^{(t)}$ denotes the total number of classes of the whole label space, $\delta_{k,k'}=1$ when $k=k'$ and 0 otherwise, and $\hat{\rvw}_{k'}$ is the class prototype in $\hat{\rmW}_{\rm ETF}$ for class $k'$. 
\end{theorem*}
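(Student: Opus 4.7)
\medskip
\noindent\textbf{Proof plan.}

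The first, and arguably most important, observation is that in problem (\ref{obj_app}) the classifier $\hat{\rmW}_{\rm ETF}$ is \emph{fixed}: it is pre-assigned for the whole label space and is not an optimization variable. Therefore the incremental training objective completely decouples across sessions and across samples. Concretely, the total loss $\sum_{t=0}^{T}\frac{1}{N^{(t)}}\sum_{k=1}^{K^{(t)}}\sum_{i=1}^{n_k}\gL(\rvm^{(t)}_{k,i},\hat{\rmW}_{\rm ETF})$ is a sum of independent per-sample terms, each coupled only to its own variable $\rvm^{(t)}_{k,i}$ through its own unit-ball constraint. Hence it suffices to solve, for each class $k\in[1,K]$, the single-variable problem $\min_{\|\rvm\|\le 1}\gL(\rvm,\hat{\rmW}_{\rm ETF})$ where $\rvm$ carries label $k$; whichever session that class lives in is irrelevant, and so is the distribution of classes and sample counts across sessions. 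This is why imbalance (LTCIL) and scarcity (FSCIL) do not destroy the optimality.

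Next I would dispatch the misalignment-loss case, which is essentially a one-line Cauchy--Schwarz argument. The per-sample objective becomes $\frac{1}{2}(\hat{\rvw}_k^T\rvm-1)^2$ with $\|\hat{\rvw}_k\|=1$ and $\|\rvm\|\le 1$, so its infimum is $0$ and any minimizer must satisfy $\hat{\rvw}_k^T\rvm=1$. Since $\|\hat{\rvw}_k\|\|\rvm\|\le 1$, the Cauchy--Schwarz equality condition forces $\rvm=\hat{\rvw}_k$. Substituting into Definition~\ref{ETF} (equation (\ref{k1k2})) yields $\|\hat{\rvm}_{k,i}\|=1$ and the ETF inner products in (\ref{theorem_eq_app}) immediately.

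For the cross-entropy case I plan a KKT argument leveraging the symmetry of the ETF. Write the per-sample loss as $\gL_{\rm CE}(\rvm)=-\hat{\rvw}_k^T\rvm+\log\sum_{k'=1}^{K}\exp(\hat{\rvw}_{k'}^T\rvm)$, which is convex in $\rvm$ (composition of log-sum-exp with a linear map plus a linear term); together with the convex constraint $\|\rvm\|^2\le 1$, any KKT point is globally optimal. Taking the gradient and setting stationarity gives $-\hat{\rvw}_k+\sum_{k'}p_{k'}\hat{\rvw}_{k'}+2\mu\rvm=0$ with $p_{k'}$ the softmax weights. Trying the candidate $\rvm=\hat{\rvw}_k$ and using the two ETF identities $\hat{\rvw}_k^T\hat{\rvw}_{k'}=\tfrac{K}{K-1}\delta_{k,k'}-\tfrac{1}{K-1}$ and $\sum_{k'}\hat{\rvw}_{k'}=0$ (a direct consequence of (\ref{ETF_M})), one finds that the softmax probabilities split into two values, the sum $\sum_{k'}p_{k'}\hat{\rvw}_{k'}$ collapses to a scalar multiple of $\hat{\rvw}_k$, and a valid multiplier $\mu\ge 0$ is obtained. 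Hence $\rvm=\hat{\rvw}_k$ is a KKT point and therefore a global minimizer; applying (\ref{k1k2}) again delivers (\ref{theorem_eq_app}).

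The delicate step I anticipate is uniqueness in the CE case, since strict convexity of $\gL_{\rm CE}$ fails in directions orthogonal to $\mathrm{span}\{\hat{\rvw}_{k'}\}_{k'=1}^{K}$ (when $d>K-1$). I would handle this by first observing that any $\rvm$ can be split into its projection $\bar{\rvm}$ onto $\mathrm{span}\{\hat{\rvw}_{k'}\}$ and an orthogonal part $\rvm^{\perp}$; the loss depends only on $\bar{\rvm}$, while the constraint forces $\|\bar{\rvm}\|^2+\|\rvm^{\perp}\|^2\le 1$. Since the loss strictly decreases as $\|\bar{\rvm}\|$ increases toward $1$ (the candidate reaches the boundary with active multiplier), the optimum requires $\rvm^{\perp}=0$ and $\|\bar{\rvm}\|=1$; a symmetry/strict-convexity argument within the $(K-1)$-dimensional affine subspace then pins the minimizer down to $\hat{\rvw}_k$. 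Assembling the decoupling, the misalignment argument, and the KKT+symmetry argument completes the proof for both choices of $\gL$ and for any distribution of $K^{(t)}$ and $n_k$, which is the statement of Theorem~\ref{theorem}.
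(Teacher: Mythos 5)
Your proposal is correct, and it shares the paper's overall skeleton (per-sample decoupling thanks to the fixed $\hat{\rmW}_{\rm ETF}$; a one-line argument for the misalignment loss; KKT for cross-entropy), but the cross-entropy step is handled by a genuinely different route. The paper argues \emph{forward} from the necessary KKT conditions: it first rules out $\lambda_{k,i}=0$ to force the constraint active, then multiplies the stationarity equation by each $\hat{\rvw}_{j'}$ and uses injectivity of $x\mapsto e^{x}/x$ on the negative axis to show all off-class softmax probabilities coincide, whence $\sum_{j\ne k}p_j\hat{\rvw}_j$ collapses onto $\hat{\rvw}_k$ and the stationarity equation itself forces $\hat{\rvm}_{k,i}$ to be a positive multiple of $\hat{\rvw}_k$. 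Because KKT is \emph{necessary} here (Slater holds, $\rvm=0$ is strictly feasible), this derivation characterizes every global minimizer at once, so uniqueness and the vanishing of the component orthogonal to $\mathrm{span}\{\hat{\rvw}_{k'}\}$ come for free. You instead \emph{verify} the candidate $\rvm=\hat{\rvw}_k$ as a KKT point and invoke convexity for global optimality, which is cleaner and avoids the paper's somewhat delicate ratio manipulation (and its misstated monotonicity of $e^{x}/x$), but it only certifies that $\hat{\rvw}_k$ \emph{is} a minimizer; you correctly identify that you then owe a separate uniqueness argument, and your plan for it is sound: the loss depends only on the projection $\bar{\rvm}$ onto $\mathrm{span}\{\hat{\rvw}_{k'}\}$, on which the composition of log-sum-exp with the rank-$(K-1)$ linear map is strictly convex (the flat direction $\mathbf{1}_K$ of log-sum-exp is unreachable since $\sum_{k'}\hat{\rvw}_{k'}^T\rvm=0$), and the unique constrained minimizer $\bar{\rvm}^*=\hat{\rvw}_k$ saturates the norm bound, forcing $\rvm^{\perp}=0$. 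One small caution: your phrasing that "the loss strictly decreases as $\|\bar{\rvm}\|$ increases toward $1$" is not literally true (the loss is not a function of $\|\bar{\rvm}\|$ alone); the correct statement is that the unique minimizer over the ball in the span happens to have unit norm, which is what kills the orthogonal slack. With that repaired, your route is a valid and arguably more transparent alternative to the paper's, at the cost of an extra uniqueness step the paper's necessary-conditions derivation sidesteps.
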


\begin{proof}
From the definition of $\hat{\rmW}_{\rm ETF}$, we have,  
\begin{equation}
	\label{eq11}
	\hat{\rvw}^T_{k}\hat{\rvw}_{k'}=\frac{K}{K-1}\delta_{k,k'}-\frac{1}{K-1},\ \ \forall k, k'\in[1,K],
\end{equation}
where $\hat{\rvw}_{k}$ and $\hat{\rvw}_{k'}$ are two column vectors in $\hat{\rmW}_{\rm ETF}$. From Eq. (\ref{eq11}),
we have $\hat{\rmW}_{\rm ETF}\cdot\mathbf{1}_K=\mathbf{0}_d$, where $\mathbf{1}_K$ is an all-ones vector in $\R^K$, and $\mathbf{0}_d$ is an all-zeros vector in $\R^d$. Then we have,
\begin{equation}
	\label{sum_wk}
	\sum^K_{k=1} \hat{\rvw}_{k}=\mathbf{0}_d.
\end{equation}

When $\gL$ is the misalignment loss with the following definition, 
\begin{equation}
	\label{misalign}
	\mathcal{L}\left(\rvm^{(t)}_{k,i},\hat{\mathbf{w}}_{k}\right)=\frac{1}{2}\left(\hat{\mathbf{w}}^T_{k} \rvm^{(t)}_{k,i} - 1\right)^2,
\end{equation}
it is easy to identify that $\gL\ge0$ and the equality holds if and only if $ \hat{\rvw}_{k}^T\rvm^{(t)}_{k,i}=1$, for all $0\le t\le T, 1\le k\le K, 1\le i\le n_k$. Since $\|\hat{\rvw}_{k}\|=1$ and $\| \rvm^{(t)}_{k,i} \|^2 \le 1$, we have $ \hat{\rvw}_{k}^T\rvm^{(t)}_{k,i}\le1$. The equality holds if and only if $\| \rvm^{(t)}_{k,i} \|^2 = 1$ and $\cos\angle( \hat{\rvw}_{k}, \rvm^{(t)}_{k,i})=1$. Denote $\hat{\rmM}=[\hat{\rmM}^{(0)}, \cdots, \hat{\rmM}^{(T)}]\in\R^{d\times \sum_{t=0}^T N^{(t)}}$ as the global optimality of Eq. (\ref{obj_app}) for all sessions $0\le t\le T$. For any column vector $\hat{\rvm}_{k,i}$ in $\hat{\rmM}$, we have, 
\begin{equation}
	\|\hat{\rvm}_{k,i}\|=1,\ \ \hat{\rvm}^T_{k,i}\hat{\rvw}_{k'}=\frac{K}{K-1}\delta_{k,k'}-\frac{1}{K-1}, \notag
\end{equation}
for all $k, k'\in[1,K],\ 1\le i \le n_k$, which concludes the proof for the misalignment loss.

When $\gL$ is the cross-entropy (CE) loss, \emph{i.e.,}
\begin{equation}
	\label{ce}
	\gL\left(\rvm^{(t)}_{k,i},\hat{\rmW}_{\rm ETF}\right)=-\log \frac{\exp(\hat{\rvw}_{k}^T\rvm^{(t)}_{k,i})}{\sum_{j=1}^K\exp(\hat{\rvw}_{j}^T\rvm^{(t)}_{k,i})}, 
\end{equation}
where $0\le t \le T, 1\le k \le K^{(t)}$, and $1\le i \le n_k$. Since the problem is separable among $T+1$ sessions, we only analyze the $t$-th session and omit the superscript $(t)$ for simplicity. The objective in Eq. (\ref{ce}) is the sum of an affine function and log-sum-exp functions. When $\hat{\rmW}_{\rm ETF}$ is fixed, the loss is convex \emph{w.r.t} $\rvm_{k,i}$ with convex constraints. So, we can use the KKT condition for its global optimality. Based on Eq. (\ref{obj_app}) and Eq. (\ref{ce}), we have the Lagrange function,
\begin{align}
	\tilde{\gL} = & \frac{1}{N^{(t)}}\sum^{K^{(t)}}_{k=1}\sum^{n_k}_{i=1}    -\log \frac{\exp(\hat{\rvw}_{k}^T\rvm_{k,i})}{\sum_{j=1}^K\exp(\hat{\rvw}_{j}^T\rvm_{k,i})} + \notag \\ &\sum^{K^{(t)}}_{k=1}\sum^{n_k}_{i=1} \lambda_{k,i}(\| \rvm_{k,i}\|^2 -1),
\end{align}
where $\lambda_{k,i}$ is the Lagrange multiplier. The gradient with respect to $\rvm_{k,i}$ takes the form of:
\begin{equation}
	\frac{\partial \tilde{\gL} }{\partial \rvm_{k,i}}= -\frac{\left(1-p_k\right)}{N^{(t)}}\hat{\rvw}_{k}+\frac{1}{N^{(t)}}\sum_{j\ne k}^{K}p_j \hat{\rvw}_j+2\lambda_{k,i} \rvm_{k,i},
\end{equation}
where $1\le i\le n_k$, $1\le k\le K^{(t)}$, and $p_j$ is the softmax probability of $\rvm_{k,i}$ for the $j$-th class, \emph{i.e.,}
\begin{equation}
	p_j =  \frac{\exp(\hat{\rvw}_{j}^T\rvm_{k,i})}{\sum_{j'=1}^K\exp(\hat{\rvw}_{j'}^T\rvm_{k,i})}.
\end{equation}
Since $\| \hat{\rvw}_{k} \|=1$ and $\| \rvm_{k,i} \| \le1$, we have
$0<p_k<1$ for all $1\le k\le K$.

We now solve the equation $\frac{\partial \tilde{\gL} }{\partial \rvm_{k,i}}=0$. Assume that $\lambda_{k,i}=0$, and then we have,
\begin{equation}
	\label{lambda=0}
	\sum_{j\ne k}^{K}p_j \hat{\rvw}_j =  {\left(1-p_k\right)}\hat{\rvw}_{k}.
\end{equation}
Since $1-p_k=\sum^K_{j\ne k}p_j$ and Eq. (\ref{eq11}), multiplying $\hat{\rvw}_{k}$ by both sides of Eq. (\ref{lambda=0}), we have,
\begin{equation}
	\frac{K}{K-1}(1-p_k)=0,
\end{equation}
which contradicts with $0<p_k<1$. Then we have the other case $\lambda_{k,i}>0$. Based on the KKT condition, the global optimality $\hat{\rvm}_{k,i}$ satisfies that
\begin{equation}
	\label{eq19}
	\| \hat{\rvm}_{k,i} \|^2 =1.
\end{equation}
The equation $\frac{\partial \tilde{\gL} }{\partial \hat{\rvm}_{k,i}}=0$ leads to:
\begin{equation}
	\label{eq20}
	\sum_{j\ne k}^{K}p_j (\hat{\rvw}_j - \hat{\rvw}_k)+2N^{(t)}\lambda_{k,i} \hat{\rvm}_{k,i}=0.
\end{equation}
Based on Eq. (\ref{eq11}), for any $j'\ne k$, multiplying $\hat{\rvw}_{j'}$ by both sides of Eq. (\ref{eq20}), we have,
\begin{equation}
	p_{j'}\frac{K}{K-1} + 2N^{(t)}\lambda_{k,i}\hat{\rvm}_{k,i}^T \hat{\rvw}_{j'}=0.
\end{equation}
Since $\forall k\in [1,K]$, $p_k>0$, we have $\hat{\rvm}_{k,i}^T \hat{\rvw}_{j'}<0$. Then for any $j_1,j_2\ne k$, 
\begin{equation}
	\label{eq22}
	\frac{p_{j_1}}{p_{j_2}}=\frac{\exp(\hat{\rvw}_{j_1}^T\hat{\rvm}_{k,i})}{\exp(\hat{\rvw}_{j_2}^T\hat{\rvm}_{k,i})}=\frac{ \hat{\rvw}_{j_1}^T\hat{\rvm}_{k,i}}{ \hat{\rvw}_{j_2}^T \hat{\rvm}_{k,i}}.
\end{equation}
The function $f(x)=\exp(x)/x$ is monotonically increasing when $x<1$. So, Eq. (\ref{eq22}) indicates that
\begin{equation}
	p_{j_1}=p_{j_2},\ \hat{\rvw}_{j_1}^T\hat{\rvm}_{k,i} = \hat{\rvw}_{j_2}^T \hat{\rvm}_{k,i},\ \forall j_1, j_2 \ne k,
\end{equation}
and
\begin{equation}
	\label{eq24}
	p_j = \frac{1-p_k}{K-1}=-\frac{2N^{(t)}\lambda_{k,i}\hat{\rvm}_{k,i}^T \hat{\rvw}_{j}(K-1)}{K},\ \forall j\ne k.
\end{equation}
Multiplying $\hat{\rvw}_{k}$ by both sides of Eq. (\ref{eq20}), we have,
\begin{equation}
	\label{eq25}
	-\frac{K}{K-1}(1-p_k) + 2N^{(t)}\lambda_{k,i}\hat{\rvm}_{k,i}^T \hat{\rvw}_{k}=0.
\end{equation}
Combing Eq. (\ref{eq24}) and Eq. (\ref{eq25}), we have, 
\begin{equation}
	\label{eq26}
	\hat{\rvm}_{k,i}^T \hat{\rvw}_{j}(K-1) + \hat{\rvm}_{k,i}^T \hat{\rvw}_{k}=0,\ \forall j\ne k.
\end{equation}
Based on $p_j=\frac{1-p_k}{K-1}, \forall j\ne k$, and Eq. (\ref{sum_wk}), we can rewrite Eq. (\ref{eq20}) as:
\begin{equation}
	-\frac{(1-p_k)K}{K-1}\hat{\rvw}_{k}+2N^{(t)}\lambda_{k,i} \hat{\rvm}_{k,i}=0,
\end{equation}
which means that $\hat{\rvm}_{k,i}$ is aligned with $\hat{\rvw}_{k}$, \emph{i.e.,} $\cos\angle(\hat{\rvm}_{k,i}, \hat{\rvw}_{k})=1$. Given that $\| \hat{\rvw}_{k} \|=1$ and 	$\| \hat{\rvm}_{k,i} \| =1$ (Eq. (\ref{eq19})), we have,
\begin{equation}
	\hat{\rvm}_{k,i}^T \hat{\rvw}_{k}=1,\notag
\end{equation}
and Eq. (\ref{eq26}) leads to:
\begin{equation}
	\hat{\rvm}_{k,i}^T \hat{\rvw}_{j}=-\frac{1}{K-1},\ \forall j\ne k.\notag
\end{equation}
Therefore, for any column vector $\hat{\rvm}_{k,i}$ in $\hat{\rmM}$, we have, 
\begin{equation}
	\|\hat{\rvm}_{k,i}\|=1,\ \ \hat{\rvm}^T_{k,i}\hat{\rvw}_{k'}=\frac{K}{K-1}\delta_{k,k'}-\frac{1}{K-1}, \notag
\end{equation}
for all $k, k'\in[1,K],\ 1\le i \le n_k$, which concludes the proof for the CE loss.
\end{proof}

\begin{figure*}[t]
	\begin{subfigure}{.33\textwidth}
		\centering
		\includegraphics[width=1.\linewidth]{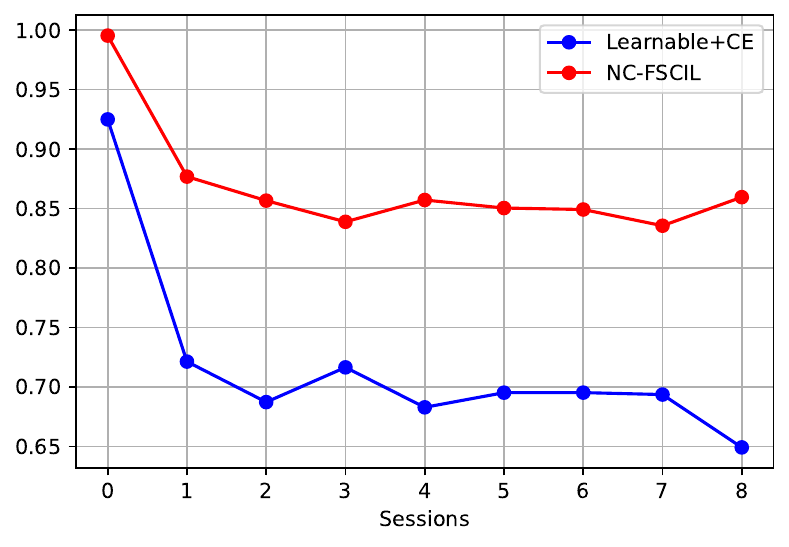}  
		\caption{train (each)}
		\label{hwintra:sub-first}
	\end{subfigure}
	\begin{subfigure}{.33\textwidth}
		\centering
		\includegraphics[width=1.\linewidth]{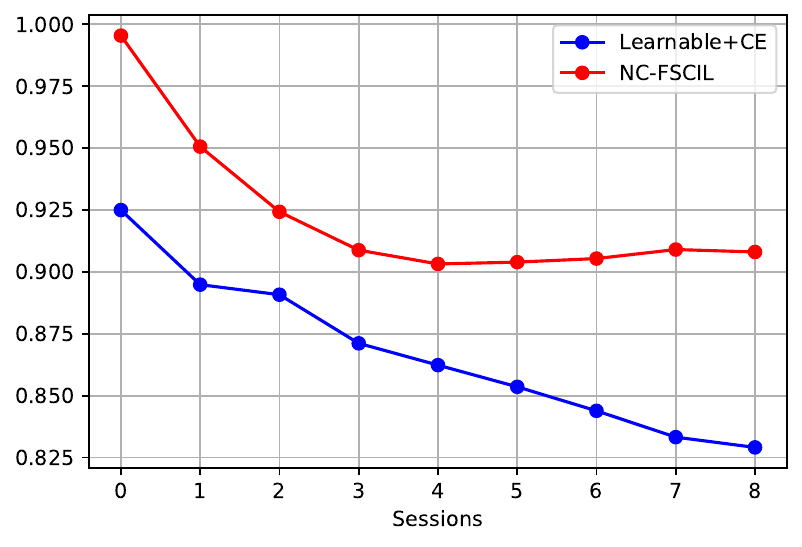}  
		\caption{train (accumulate)}
		\label{hwintra:sub-second}
	\end{subfigure}
	\begin{subfigure}{.33\textwidth}
		\centering
		\includegraphics[width=1.\linewidth]{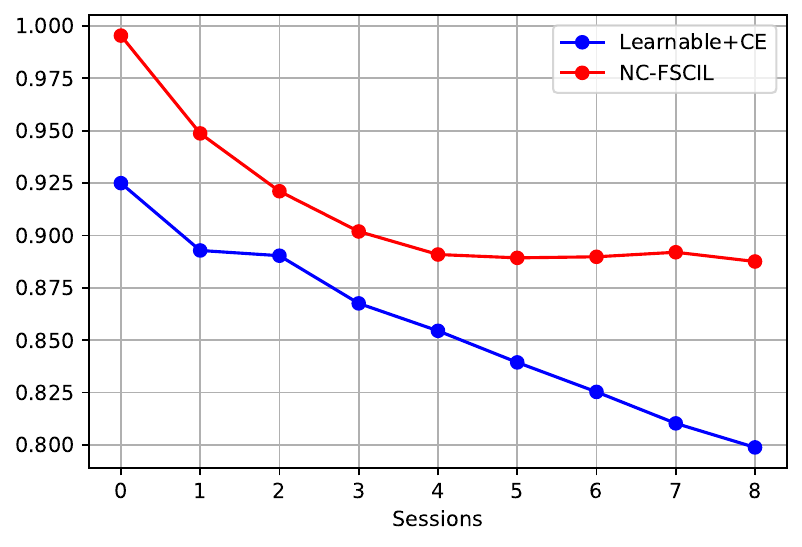}  
		\caption{train (base)}
		\label{hwintra:sub-three}
	\end{subfigure}
	\begin{subfigure}{.33\textwidth}
		\centering
		\includegraphics[width=1.\linewidth]{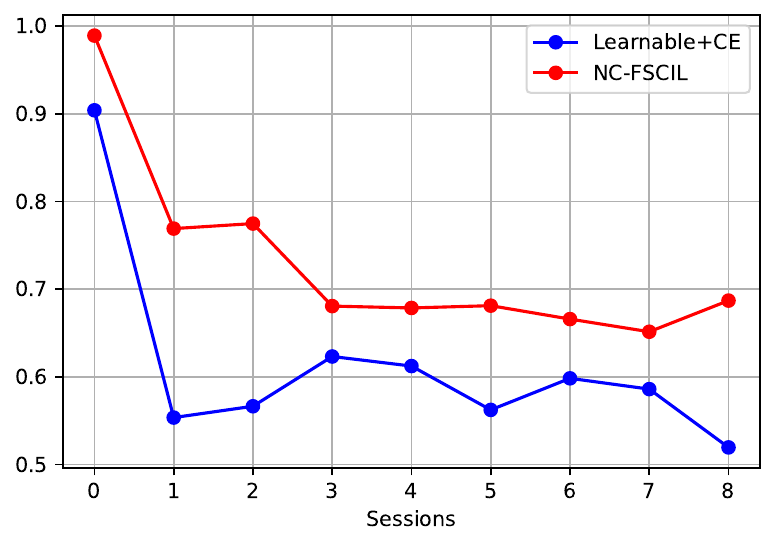}  
		\caption{test (each)}
		\label{hwintra:sub-four}
	\end{subfigure}
	\begin{subfigure}{.33\textwidth}
		\centering
		\includegraphics[width=1.\linewidth]{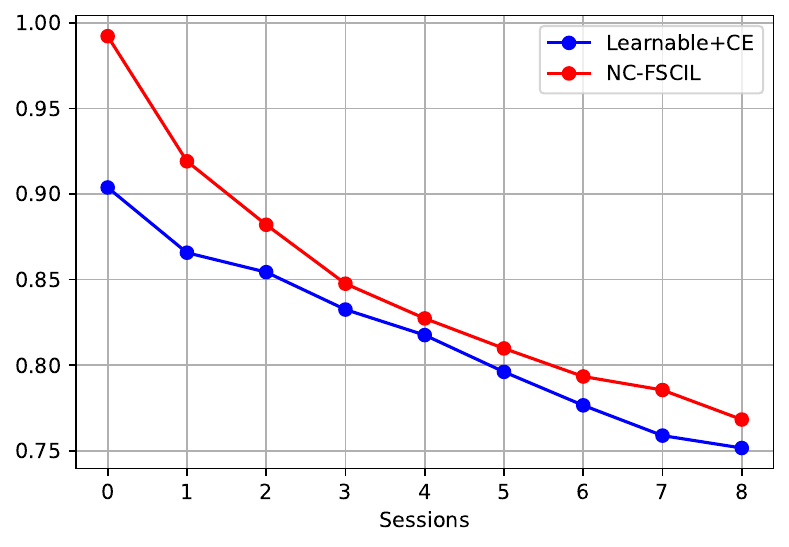} 
		\caption{test (accumulate)}
		\label{hwintra:sub-five}
	\end{subfigure}
	\begin{subfigure}{.33\textwidth}
		\centering
		\includegraphics[width=1.\linewidth]{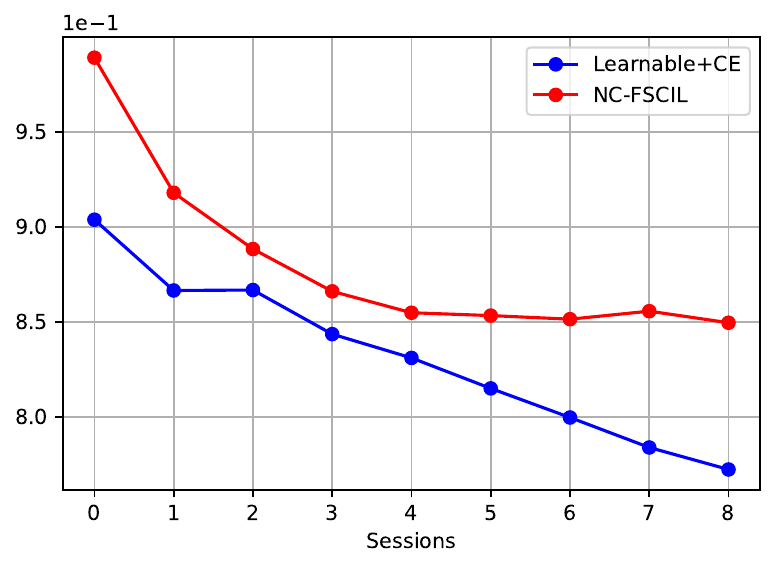}  
		\caption{test (base)}
		\label{hwintra:sub-six}
	\end{subfigure}
	\vspace{-3mm}
	\caption{Average cosine similarity between features and classifier prototypes of the same class, \emph{i.e.,} $\mathrm{Avg}_{k}\{\cos\angle(\rvm_{k}-\rvm_{G}, {\rvw}_{k}) \}$, where $\rvm_{k}$ is the within-class mean of class $k$ features, $\rvm_{G}$ denotes the global mean, and ${\rvw}_{k}$ is the classifier prototype of class $k$. Statistics are performed among classes in each session (a and d), all encountered classes by the current session (b and e), and only the base session classes (c and f), on train set (a, b, c) and test set (d, e, f), for our NC-FSCIL on miniImageNet.}
	\label{fig:hwintra}
\end{figure*}

\begin{figure*}[t]
	\begin{subfigure}{.33\textwidth}
		\centering
		\includegraphics[width=1.\linewidth]{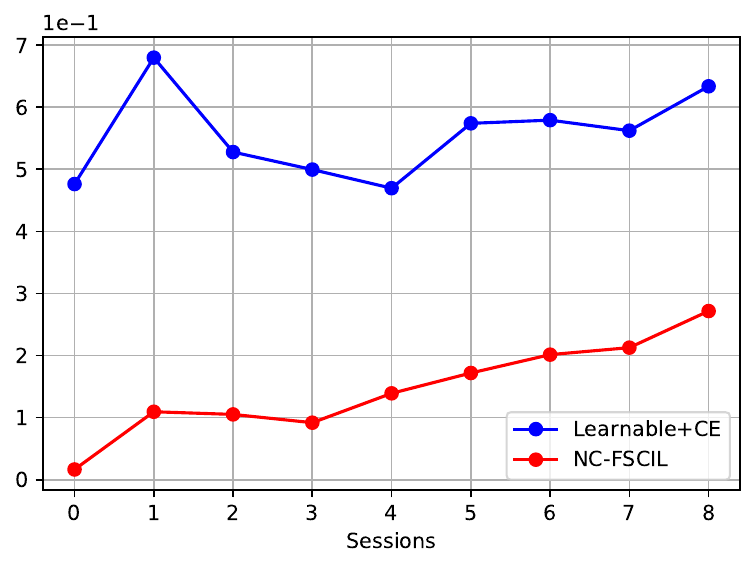}  
		\caption{train (each)}
		\label{trace:sub-first}
	\end{subfigure}
	\begin{subfigure}{.33\textwidth}
		\centering
		\includegraphics[width=1.\linewidth]{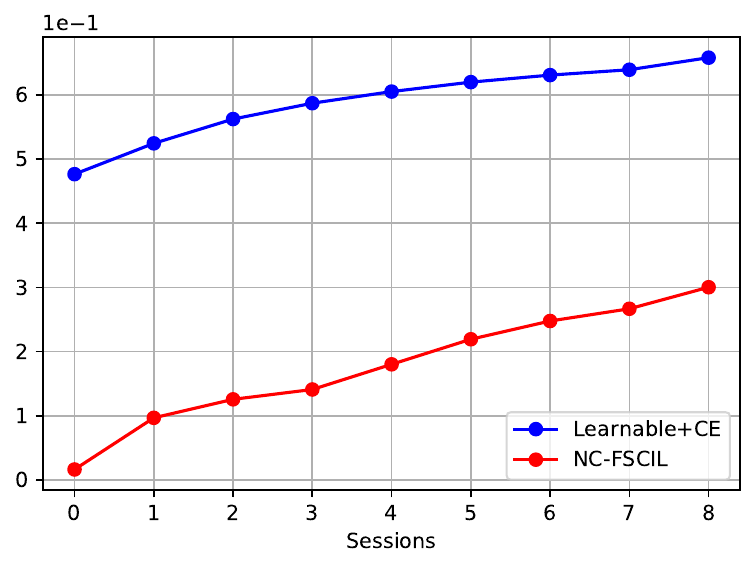}  
		\caption{train (accumulate)}
		\label{trace:sub-second}
	\end{subfigure}
	\begin{subfigure}{.33\textwidth}
		\centering
		\includegraphics[width=1.\linewidth]{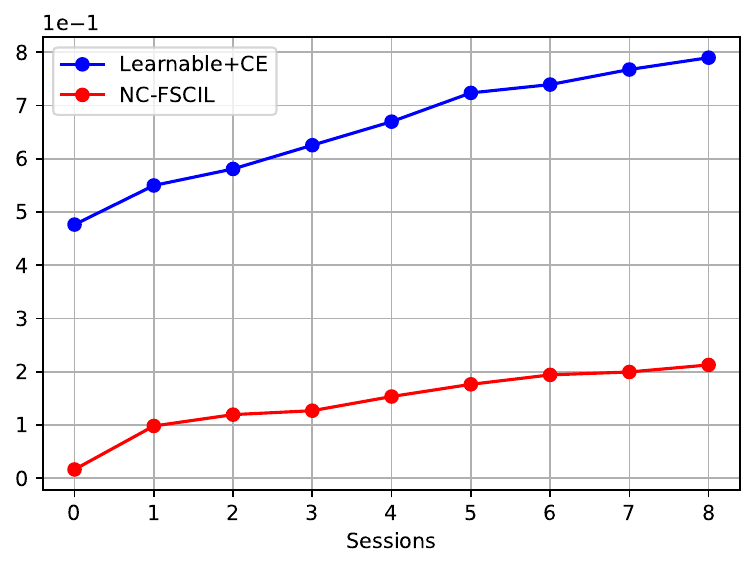}  
		\caption{train (base)}
		\label{trace:sub-three}
	\end{subfigure}
	\begin{subfigure}{.33\textwidth}
		\centering
		\includegraphics[width=1.\linewidth]{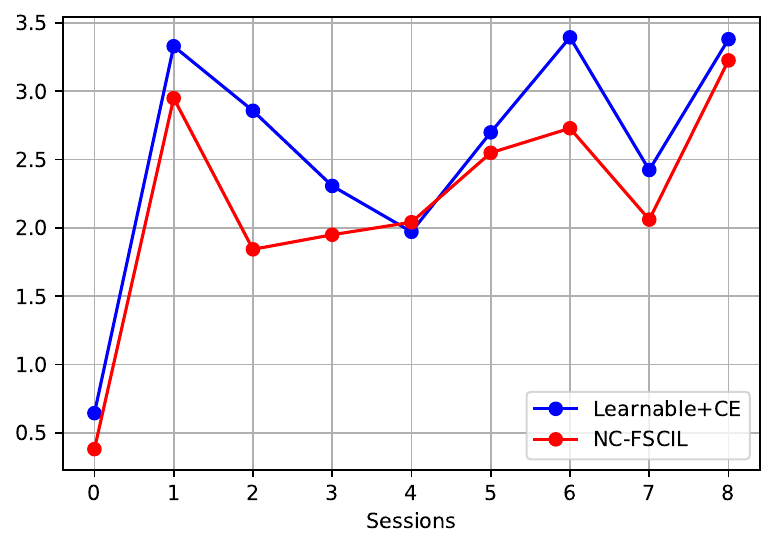}  
		\caption{test (each)}
		\label{trace:sub-four}
	\end{subfigure}
	\begin{subfigure}{.33\textwidth}
		\centering
		\includegraphics[width=1.\linewidth]{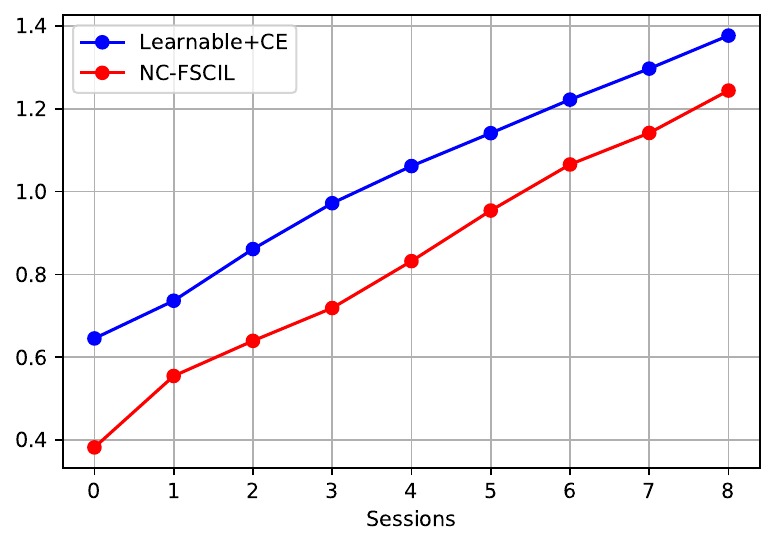} 
		\caption{test (accumulate)}
		\label{trace:sub-five}
	\end{subfigure}
	\begin{subfigure}{.33\textwidth}
		\centering
		\includegraphics[width=1.\linewidth]{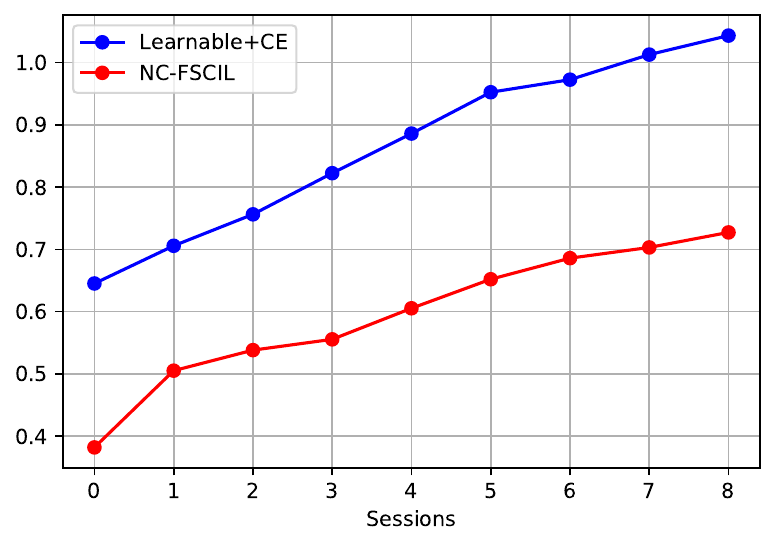}  
		\caption{test (base)}
		\label{trace:sub-six}
	\end{subfigure}
	\vspace{-3mm}
	\caption{Trace ratio of within-class covariance to between-class covariance, \emph{i.e.,} ${{\rm tr}(\Sigma_W)}/{{\rm tr}(\Sigma_B)}$, where $\Sigma_W$ is the within-class covariance, and $\Sigma_B$ denotes the between-class covariance. Statistics are performed among classes in each session (a and d), all encountered classes by the current session (b and e), and only the base session classes (c and f), on train set (a, b, c) and test set (d, e, f), for our NC-FSCIL on miniImageNet.}
	\label{fig:trace}
\end{figure*}

\section{Training Details}

\subsection{CIL, LTCIL, and GCIL}


For ImageNet (including ImageNet-100, ImageNet-1k, and ImageNet100-LT datasets), we adopt the basic ResNet-18~\cite{he2016deep} as backbone, which is the same as previous methods~\cite{wang2022foster, shi2022mimicking, hou2019learning, yan2021dynamically}. For CIFAR (including CIFAR-100 and CIFAR100-LT), we adopt a shallower ResNet-12~\cite{he2016deep} as backbone. The learning rates for CIFAR and ImageNet are 0.25 and 0.1, respectively. For both CIFAR and ImageNet (including the 5 datasets of CIL and LTCIL in total), we train for 200 epochs in each session using the SGD optimizer with a momentum of 0.9, a weight decay of 5e-4, and a batchsize of 512. We use the cosine annealing learning rate scheduler (minimum learning rate ratio of 0.01), and use Mixup and Cutmix for augmentations following common practices. 
The generalized case (GCIL) experiments are conducted under the same training setting as CIL and LTCIL, and all methods use the same backbone network of ResNet-18.

\subsection{FSCIL}

\label{train details}

Prior studies widely adopt ResNet-12, ResNet-18, and ResNet-20 \cite{he2016deep} for FSCIL experiments. 
For miniImageNet and CIFAR-100, we use ResNet-12 following \cite{hersche2022constrained}. For CUB-200, we use ResNet-18 (pre-trained on ImageNet) following most studies. We adopt a two-layer MLP block as the projection layer following the practice in \cite{peng2022few}.
We use the standard data pre-processing and augmentation schemes including random resizing, random flipping, and color jittering \cite{tao2020few,zhang2021few,peng2022few}. We train all models with a batchsize of 512 in the base session, and a batchsize of 64 (containing new session data and intermediate features in the memory) in each incremental session. On miniImageNet, we train for 500 epochs in the base session, and 100-170 iterations in each incremental session. The initial learning rate is 0.25 for base session, and 0.025 for incremental sessions. On CIFAR-100, we train for 200 epochs in the base session, and 50-200 iterations in each incremental session. The initial learning rate is 0.25 for both base and incremental sessions. On CUB-200, we train for 80 epochs in the base session, and 105-150 iterations in each incremental session. The initial learning rates are 0.025 and 0.05 for base session and incremental sessions, respectively. In all experiments, we adopt a cosine annealing learning rate strategy, and use SGD with momentum as optimizer.

\section{More Results}
\label{more results}

\subsection{More results on FSCIL}

Our experimental result on CUB-200 is shown in Table \ref{table:cub}. We achieve a better accuracy in the last session than most of the baseline methods. Although we do not surpass ALICE in the last session on CUB-200, we still have the best average incremental accuracy among all methods.

We also visualize the average cosine similarities between feature and prototype of the same class, \emph{i.e.,} $\mathrm{Avg}_{k}\{\cos\angle(\rvm_{k}-\rvm_{G}, {\rvw}_{k}) \}$, in our NC-FSCIL, and the baseline method with a learnable classifier and the CE loss.
A higher average $\cos\angle(\rvm_{k}-\rvm_{G}, {\rvw}_{k} )$ indicates that feature centers are more closely aligned with their corresponding classifier prototypes of the same class.
As shown in Figure \ref{fig:hwintra}, the values of our method are consistently higher than those of the baseline method. Figure \ref{hwintra:sub-first} and Figure \ref{hwintra:sub-four} reveal that our method has a better feature-classifier alignment in each session of the incremental training on both train and test sets. When we measure on all the encountered classes by each session in Figure \ref{hwintra:sub-second} and Figure \ref{hwintra:sub-five}, the metric for our method does not change obviously after the 4-th session on train set, while the metric for the baseline method keeps decreasing as training incrementally. Especially for the base session classes, our method is able to keep the metric stable on both train and test sets after the decline of the first 3-4 sessions, as shown in Figure \ref{hwintra:sub-three} and Figure \ref{hwintra:sub-six}. As a comparison, the baseline method cannot mitigate the deterioration. Given that the base session has the most classes, the performance on base session classes largely decides the final accuracy after the last session for FSCIL. Therefore, the superiority of our method can be attributed to our ability of keeping the feature-classifier alignment well for the old classes, especially for the base session classes.

We also visualize the trace ratio of within-class covariance to between-class covariance of the last-layer features, \emph{i.e.}, ${{\rm tr}(\Sigma_W)}/{{\rm tr}(\Sigma_B)}$.
The within-class covariance $\Sigma_W $ and the between-class covariance $\Sigma_B$ are defined as:
\begin{equation}
	\Sigma_W = \mathrm{Avg}_k\{\Sigma^{(k)}_W\},\quad\Sigma^{(k)}_W=\mathrm{Avg}_{i}\{(\rvm_{k,i}-\rvm_{k})(\rvm_{k,i}-\rvm_{k})^T\},\notag
\end{equation}
and
\begin{equation}
	\Sigma_B = \mathrm{Avg}_{k}\{(\rvm_{k}-\rvm_{G})(\rvm_{k}-\rvm_{G})^T\},\notag
\end{equation}
where $\rvm_{k,i}$ is the feature of sample $i$ in class $k$, $\rvm_{k}$ is the within-class mean of class $k$ features, and $\rvm_{G}$ denotes the global mean of all features. A lower within-class variation with a higher between-class variation corresponds to a better Fisher Discriminant Ratio. As shown in Figure \ref{fig:trace}, we compare the trace ratio of within-class covariance to between-class covariance between our method and the baseline method. We observe similar patterns to Figure \ref{fig:hwintra}. Concretely, the trace ratio metric of our method is consistently lower that of baseline. For the base session classes, the metric of our method increases more mildly, which corroborates our ability of maintaining the performance on the old classes, 
and is in line with the indications from Figure \ref{fig:hwintra} and the results in main paper.

\subsection{Detailed setups on the generalized case}

In our generalized case, each incremental session could by any of the normal, long-tail, and few-shot cases by sampling. We list the sampled setups of our results in Table \ref{tab:generalized_setup}. The run time consumed by all incremental sessions of all methods trained under the same setting is reported in Table \ref{tab:generalized_time}.

\begin{table*}[t] 
	\renewcommand\arraystretch{1}
	\begin{center}
		\caption{Performance of FSCIL in each session on CUB-200 and comparison with other studies. The top rows list class-incremental learning and few-shot learning results implemented by \cite{tao2020few,zhang2021few,liu2022few,zhou2022forward} in the FSCIL setting. ``Average Acc.'' is the average incremental accuracy. ``Final Improv.'' calculates the improvement of our method after the last session. }
		\resizebox{\textwidth}{!}{
			\begin{tabular}{lccccccccccccc}
				\toprule
				\multicolumn{1}{l}{\multirow{2}{*}{\bf Methods}} & \multicolumn{11}{c}{\bf Accuracy in each session (\%) $\uparrow$} & \bf Average & \bf Final \\ 
				\cmidrule{2-12}
				& \bf 0   & \bf 1      &\bf  2      & \bf 3    & \bf 4     &\bf  5  & \bf 6     & \bf 7      & \bf 8     &   \bf 9  & \bf 10   & \bf Acc. &\bf  Improv.  \\ 
				\midrule
				iCaRL~\cite{rebuffi2017icarl}        & 68.68   & 52.65     & 48.61    & 44.16  & 36.62   & 29.52  & 27.83  & 26.26    & 24.01   &23.89  & 21.16  & 36.67 &\bf +38.28   \\
				EEIL~\cite{castro2018end}         & 68.68   & 53.63      &47.91    & 44.20  & 36.30     & 27.46  & 25.93  & 24.70     & 23.95   &24.13 & 22.11  & 36.27&\bf +37.33   \\
				NCM~\cite{hou2019learning}         & 68.68   & 57.12     & 44.21    & 28.78  & 26.71    & 25.66  & 24.62  & 21.52    & 20.12   &20.06  & 19.87  & 32.49&\bf +39.57    \\
				Fixed classifier~\cite{pernici2021class}          & 68.47   & 51.00     &45.42    & 40.76  &   35.90   & 33.18  & 27.23  & 24.24     & 21.18   &17.34 & 16.20  & 34.63 & \bf +43.24 \\
				D-NegCosine~\cite{liu2020negative}    &74.96  & 70.57     & 66.62   & 61.32   & 60.09    & 56.06  & 55.03  & 52.78     & 51.50   &50.08 & 48.47   & 58.86 &\bf +10.97 \\
				D-DeepEMD~\cite{zhang2020deepemd}     & 75.35   & 70.69     & 66.68   & 62.34  & 59.76     & 56.54  & 54.61  & 52.52    &50.73   &49.20 & 47.60 & 58.73 & \bf +11.84  \\
				D-Cosine~\cite{vinyals2016matching}    &75.52  & 70.95     & 66.46   & 61.20   & 60.86    & 56.88  & 55.40  & 53.49     & 51.94   & 50.93 & 49.31   & 59.36 & \bf +10.13     \\
				DeepInv \cite{yin2020dreaming} & 75.90&	70.21&	65.36&	60.14 &	58.79&	55.88&	53.21&	51.27&	49.38&	47.11&	45.67&	57.54&\bf +13.77\\
				\midrule
				TOPIC~\cite{tao2020few}            & 68.68   & 62.49      & 54.81    & 49.99   & 45.25     & 41.40 & 38.35  & 35.36    & 32.22  &28.31 & 26.28   & 43.92 &\bf +33.16 \\
				IDLVQ \cite{chen2021incremental}  &77.37&	74.72&	70.28&	67.13&	65.34&	63.52&	62.10&	61.54&	59.04&	58.68&	57.81&	65.23& \bf +1.63\\
				SPPR~\cite{zhu2021self}  & 68.68   & 61.85     & 57.43    & 52.68   & 50.19   & 46.88 & 44.65   & 43.07      & 40.17     & 39.63   & 37.33&49.32 & \bf +22.11      \\
				\cite{cheraghian2021synthesized} & 68.78&	59.37&	59.32&	54.96&	52.58&	49.81&	48.09&	46.32&	44.33&	43.43&	43.23&	51.84& \bf +16.21\\
				CEC~\cite{zhang2021few}                & 75.85   & 71.94    & 68.50   & 63.50   & 62.43    & 58.27 & 57.73 & 55.81    &54.83  &53.52  & 52.28   & 61.33 &\bf +7.16   \\
				LIMIT \cite{zhou2022few}  &76.32&	74.18&	72.68&	69.19&\bf	68.79&\bf	65.64&	63.57&	62.69&	\bf 61.47&	60.44&	58.45&	66.67& \bf +0.99\\
				MgSvF \cite{zhao2021mgsvf}	&72.29&	70.53&	67.00&	64.92&	62.67&	61.89&	59.63&	59.15&	57.73&	55.92&	54.33&	62.37& \bf +5.11\\
				MetaFSCIL \cite{chi2022metafscil}	&75.9	&72.41	&68.78	&64.78	&62.96	&59.99	&58.3	&56.85	&54.78	&53.82	&52.64	&61.93& \bf +6.8\\
				FACT \cite{zhou2022forward}	&75.90	&73.23	&70.84	&66.13	&65.56	&62.15	&61.74	&59.83	&58.41	&57.89	&56.94	&64.42& \bf +2.5\\
				Data-free replay \cite{liu2022few}	&75.90	&72.14	&68.64	&63.76	&62.58	&59.11	&57.82	&55.89	&54.92	&53.58	&52.39	&61.52& \bf +7.05 \\
				ALICE \cite{peng2022few}	&77.40	&72.70	&70.60	&67.20	&65.90	&63.40	&62.90	&61.90	&60.50	&\bf 60.60	&\bf 60.10	&65.75& -0.66\\
				\midrule
				\bf  NC-FSCIL (ours)            & \bf 80.45   & \bf 75.98     & \bf72.30    & \bf 70.28  & 68.17   & 65.16  & \bf 64.43   &\bf 63.25     & 60.66   & 60.01   & 59.44 & \bf 67.28 & \\
				\bottomrule
			\end{tabular}
		}
		\label{table:cub}
	\end{center}
\end{table*}

\begin{table*}[!ht]
	\renewcommand\arraystretch{1}
	\centering
	\small
	\caption{The detailed sampled setups for each experiment of the generalized case results. ``nm'' denotes the normal case, ``lt'' denotes the long-tail case, and ``fs'' refers to the few-shot case.}
	\resizebox{\textwidth}{!}
	{
		\begin{tabular}{c|c|ccccccccccccccccccccccccc}
			\toprule[0.1em]
			
			\multirow{2}{*}{Mode} &  \multirow{2}{*}{Exp No.} & \multicolumn{25}{c}{Session Types}\\
			
			&&1&2&3&4&5&6&7&8&9&10&11&12&13&14&15&16&17&18&19&20&21&22&23&24&25\\
			\midrule
			\multirow{4}{*}{{10 Steps}} & exp-1 & nm & fs & nm & lt & fs & fs & nm & nm & fs & lt & 
			\multicolumn{15}{c}{-}\\
			
			& exp-2 & lt & lt & nm & lt & nm & nm & lt & fs & fs & nm &  \multicolumn{15}{c}{-}\\
			& exp-3 & nm & nm & fs & fs & lt & fs & lt & nm & nm & nm &  \multicolumn{15}{c}{-}\\
			& exp-4 & nm & lt & nm & nm & fs & fs & nm & lt & lt & nm &  \multicolumn{15}{c}{-}\\
			
			\midrule		
			
			\multirow{4}{*}{{25 Steps}} & exp-5 & nm &	nm & lt	&fs	& lt & fs	& fs &	nm	& lt	& lt	& nm &	fs&	lt&	nm&	fs&	lt&	lt&	lt&	fs&	lt&	nm&	lt&	nm	& fs &	lt \\
			
			& exp-6 &fs &	nm &	fs &	nm &	lt &	nm &	fs &	nm &	fs &	fs &	lt &	 nm &	nm &	lt &	nm &	nm &	nm &	fs &	fs &	lt &	lt &	nm &	 nm &	fs &	nm \\
			& exp-7 & nm &	fs &	nm &	nm &	lt &	lt &	fs &	nm &	fs &	fs &	nm &	 lt &	fs &	fs &	lt &	fs &	fs &	lt &	nm &	nm &	lt &	lt &	fs &	nm &	 fs  \\
			& exp-8 & nm &	nm &	lt &	nm &	fs &	lt &	lt &	nm &	fs &	lt &	lt &	fs &	 lt &	lt &	nm &	lt &	nm &	nm &	nm &	lt &	lt &	lt &	lt &	lt &	 fs  \\
			\bottomrule[0.1em]
		\end{tabular}
	}
	\label{tab:generalized_setup}
\end{table*}

\begin{table*}
	\renewcommand\arraystretch{1}
	\centering
	\small
	\caption{Run time (GPU hours) consumed by all incremental sessions tested on an NVIDIA-V100 node.}
	\setlength{\tabcolsep}{3pt}
	{
		\begin{tabular}{l|cccccccc}
			\toprule[0.1em]
			
			Method & exp-1 & exp-2 & exp-3 &  exp-4 & exp-5 & exp-6 & exp-7 & exp-8\\			
			\midrule		
			LUCIR & 1.1 & 1.2 & 1.8 & 2.0 & 8.0 & 8.0 & 7.3 & 8.0 \\
			AANet & 1.4 & 1.2 & 1.8 & 2.0 & 8.0 & 8.0 & 7.8 & 8.3 \\
			LT-CIL & 2.0 & 2.2 & 3.3 & 3.3 & 13.8 & 15.3 & 13.8 & 15.3 \\
			ALICE & 0.3 & 0.3 & 0.3 & 0.3 & 1.5 & 1.5 & 1.5 & 1.5 \\
			NC-GCIL (ours) & 1.1 & 1.2 & 1.8 & 2.0 & 8.0 & 8.0 & 7.3 & 8.0\\
			\bottomrule[0.1em]
		\end{tabular}
	}
	\label{tab:generalized_time}
\end{table*}

\end{document}